\documentclass{article}

\usepackage{microtype}
\usepackage{graphicx}
\usepackage{subcaption}
\usepackage{booktabs} 
\usepackage{amsfonts}
\usepackage{amssymb}
\usepackage{float}
\usepackage{hyperref}
\usepackage{bm}
\usepackage{bbm}

\usepackage[preprint]{icml2026}

\usepackage{amsmath}
\usepackage{amssymb}
\usepackage{mathtools}
\usepackage{amsthm}
\usepackage{physics}
\usepackage{float}
\usepackage[capitalize,noabbrev]{cleveref}

\theoremstyle{plain}
\newtheorem{theorem}{Theorem}[section]

\theoremstyle{definition}
\newtheorem{definition}[theorem]{Definition}
\newtheorem{assumption}[theorem]{Assumption}
\theoremstyle{remark}
\newtheorem{remark}[theorem]{Remark}

\def\x{\mathbf{x}}
\def\R{\mathbb{R}}
\def\w{\mathbf{w}}

\usepackage[textsize=tiny]{todonotes}

\icmltitlerunning{Grokking in Linear Models with Logistic Regression}

\begin{document}

\twocolumn[
  \icmltitle{Grokking in Linear Models for Logistic Regression}



  \icmlsetsymbol{equal}{*}

  \begin{icmlauthorlist}
    \icmlauthor{Nataraj Das}{equal,sch} 
    \icmlauthor{Atreya Vedantam}{equal,sch} 
    \icmlauthor{Chandrasekhar Lakshminarayanan}{sch}
  \end{icmlauthorlist}

  \icmlaffiliation{sch}{Indian Institute of Technology Madras, Chennai}

  \icmlcorrespondingauthor{Nataraj Das}{da23d404@smail.iitm.ac.in}
  \icmlcorrespondingauthor{Atreya Vedantam}{ee22b004.smail.iitm.ac.in}
  \icmlcorrespondingauthor{Chandrasekhar Lakshminarayanan}{chandrashekar@iitm.ac.in}
  \icmlkeywords{Machine Learning}

  \vskip 0.3in
]



\printAffiliationsAndNotice{\icmlEqualContribution}

\begin{abstract}
Grokking, the phenomenon of delayed generalization, is often attributed to the depth and compositional structure of deep neural networks.   We study grokking in one of the simplest possible settings: the learning of a linear model with logistic loss for binary classification on data that are linearly (and max margin) separable about the origin. We investigate three testing regimes: (1) test data drawn from the same distribution as the training data, in which case grokking is not observed; (2) test data concentrated around the margin, in which case grokking is observed; and (3) adversarial test data generated via projected gradient descent (PGD) attacks, in which case grokking is also observed. We theoretically show that the implicit bias of gradient descent induces a three-phase learning process—population-dominated, support-vector-dominated unlearning, and support-vector-dominated generalization—during which delayed generalization can arise. Our analysis further relates the emergence of grokking to asymmetries in the data, both in the number of examples per class and in the distribution of support vectors across classes, and yields a characterization of the grokking time. We experimentally validate our theory by planting different distributions of population points and support vectors, and by analyzing accuracy curves and hyperplane dynamics. Overall, our results demonstrate that grokking does not require depth or representation learning, and can emerge even in linear models through the dynamics of the bias term.
\end{abstract}

\section{Introduction}

One of the most striking properties of modern machine learning systems is their ability to generalize from finite training data \cite{neyshabur, zhang, Papyan_2020,maa,kaplan}. Recent work has revealed a counterintuitive phenomenon in which models achieve near-perfect performance on the training set long before exhibiting meaningful improvements on test data. 
This delayed generalization behavior is now commonly referred to as \emph{grokking} \cite{Junior}. 
Despite growing empirical and theoretical evidence across architectures and tasks \cite{tan, fan, mohamadi}, a complete principled theoretical understanding of grokking remains incomplete.

Grokking was first reported by \cite{power} in Transformer models trained on modular arithmetic tasks. 
Subsequent work has documented grokking across a wide range of architectures, datasets, and optimization regimes, and has proposed various mechanistic explanations \cite{nanda, lyu}. 
Recently, there has been increasing interest in understanding grokking \cite{davies, doshi,DeMoss} through analytically tractable models that isolate the essential ingredients responsible for delayed generalization.

In this work, we study grokking in one of the simplest possible settings for binary classification: a linear model trained with logistic loss on data that are linearly separable (and max margin) through the origin. 
While the data admit a separating hyperplane with zero bias, the model we train includes a learnable bias parameter. 
This seemingly mild mismatch turns out to be sufficient to induce rich training dynamics and, under appropriate test distributions, delayed generalization. 
These results indicate that delayed generalization need not rely on depth or representation learning, and can already arise in simple linear models through optimization dynamics alone.

A key insight of our analysis is that grokking in this setting is driven primarily by the dynamics of the bias term, rather than by continued evolution of the weight vector. 
While the weight follows the classical max-margin trajectory and stabilizes in direction early in training, the bias undergoes a slower, support-vector-driven evolution that governs delayed generalization.

Our contributions can be summarized as follows.

\begin{enumerate}
    \item \textbf{Three-phase learning dynamics.}
    We show in \Cref{thm_main_1} that, in our setting, the weight vector follows the same asymptotic trajectory characterized by \citet{soudry}: its norm grows as \(\log t\) while its direction converges to the maximum-margin separator. 
    We refer to this regime as \emph{Phase~1}. 
    We then characterize the evolution of the bias term after Phase~1 in \Cref{thm_main_1}. 
    Using this characterization, we show that learning proceeds through two additional phases: in \emph{Phase~2}, the bias deviates away from the generalizing solution, and in \emph{Phase~3}, it recovers and converges back. 
    We show that when grokking occurs, it arises during Phases~2 and~3.

    \item \textbf{Factors influencing grokking.}
    While Phase~1 is driven by the entire training dataset, Phases~2 and~3 are governed solely by the support vectors. 
    Using \Cref{thm_main_1}, we show that asymmetries in the data—both class imbalance in the population and imbalance among the support vectors—directly influence whether grokking occurs and shape the resulting dynamics.

    \item \textbf{Grokking time.}
    We formally define grokking in Definition~\ref{def:grok}. 
    In \Cref{thm_main_3}, we provide a characterization of the grokking time, quantifying the delay between the attainment of near-perfect training accuracy and near-optimal test accuracy, and explicitly relate it to properties of the support vectors and the test distribution.

    \item \textbf{Numerical experiments.}
    We validate our theoretical predictions empirically. 
    For clarity of exposition, we consider three types of test data: 
    (i) \emph{standard}, where the test distribution matches the training distribution; 
    (ii) \emph{concentrated}, where the data are separable with the same margin as the training data but concentrated near the decision boundary; and 
    (iii) \emph{adversarial}, generated using projected gradient descent attacks. 
    We do not observe grokking in the standard setting, but observe clear grokking behavior in both the concentrated and adversarial settings. 
    Moreover, whenever grokking occurs, the observed dynamics align closely with the predictions of our theory.
\end{enumerate}

\section{Related Work}\label{sec2}

The phenomenon of grokking has attracted considerable attention since its discovery, yet prior work has largely remained empirical or confined to specific architectural assumptions. Our work provides an understanding of grokking in a single learnable hyperplane on linearly separable data.

When \cite{power} introduced grokking, they observed that optimization required for generalization increases as dataset size decreases but provided no theoretical justification. We derive closed-form expressions for grokking time as a function of the support vector imbalance and the sensitivity of the concentration around the margin (Theorem \eqref{thm_main_3}).

\cite{liu2022} empirically linked generalization to structured embeddings and identified a critical training size beyond which grokking vanishes, while \cite{gromov} made similar observations. We find a connection between the number of support vectors and learning in Sec. \eqref{sec5}.

Several studies have proposed mechanisms for grokking: \cite{thilak} suggested a `slingshot' mechanism, \cite{liu2023} proposed an LU mechanism involving a ``goldilocks zone'' in weight space, and \cite{kumar} linked grokking to transitions from lazy to rich training dynamics. Our work shows that grokking in our setting arises from implicit bias in gradient descent, manifesting as three distinct learning phases---a population dominated phase, a support vector dominated unlearning phase and a support vector dominated generalization phase (Theorem~\ref{thm_main_1}).

Recent efforts have analyzed tractable models: \cite{levi} studied linear regression, \cite{beck} attributed grokking to data dimensionality, and \cite{zunkovic} calculated grokking probabilities for specific datasets. 


The connection between double descent \cite{nakkiran} and grokking has been explored \cite{davies}, but without theoretical justification. Theorem~\ref{thm_main_1} explains why we should expect this behavior in our setting.

\cite{humayun} empirically studied adversarial robustness in deep networks. We observe empirically adversarial robustness even in our scenario, asserting that this phenomenon is not restricted to deep networks.

Our theoretical machinery builds upon the late-time gradient dynamics studied by \cite{jitelgarsky} and \cite{soudry}. From asymptotic behavior, we study finite time gradient dynamics in different phases.

\section{Model Setup and Notation}\label{sec3}
We consider a linearly separable training dataset $\mathcal{D} = \{(\x_1, y_1), (\x_2, y_2), \ldots, (\x_N, y_N)\}$ where features $x_i\in\R^d$ and labels $y_i \in \{-1, 1\}$ and datapoint $(x_i,y_i) \sim \mathcal{P}$. The data has a separating hyperplane that passes through the origin. Moreover, the dataset has a margin of separation $\gamma >0 $, meaning that all points lie at least $\gamma/2> 0$ on either side of this separating hyperplane. Formally,

\begin{assumption}[Separability and margin]
  The data is linearly separable about the origin and the maximum margin SVM solution passes through the origin. That is, $\exists \w_*\in \R^d$ such that $||\w_*||=1$ and $\forall \x_i, y_i\w_*^\top\x_i \geq \gamma/2$. Also, for any $\w_s\in \R^d$ and $b_s \in \R$ such that $\forall \x_i, y_i(\w_s^\top\x_i+b_s) \geq 1$ and $||\w_s||$ is minimum, $b_s = 0$.
  \label{ass:sepmargin}
\end{assumption}

\begin{assumption}
    Let $\mathcal{S} = \{\x_{s_1}, \x_{s_2}, \ldots, \x_{s_k}\}\subset\{\x_1,\ldots,\x_N\}$ be the support vectors such that $\w_*=\sum_{i=1}^k \alpha_i y_{s_i}\x_{s_i}$.
    \label{ass:supportvecs}
\end{assumption}

Finally, we assume that the distribution has finite support.
\begin{assumption}[Finite Support]
 There exists a fixed $R > 0$ such that $||\x_i|| < R$ for all $i$.
  \label{ass:support}
\end{assumption}


 
We denote the train and test accuracies as a function of time (measured as the number of epochs) by $P(t)$ and $Q(t)$ respectively. We define \emph{grokking} as follows: 

\begin{definition}[Grokking]\label{def:grok}
    Given an $\epsilon > 0$ define $T_{tr}(\epsilon)$ and $T_{te}( \epsilon)$ to be the time taken for the train and test accuracies to reach a $(1-\epsilon)$ fraction of their suprema.
    \begin{align}
        T_{tr}(\epsilon) = P^{-1}((1-\epsilon) \sup_t P(t)) \\
        T_{te}(\epsilon) = Q^{-1}((1-\epsilon) \sup_t Q(t))
    \end{align} whenever the inverses exist and the least of all the pre-images otherwise. Then the model is said to grok with a delay $\zeta > 0$ if $\exists \epsilon_0 > 0$ such that $\forall 0< \epsilon < \epsilon_0$, $T_{te}(\epsilon) > \zeta \cdot T_{tr}(\epsilon)$. The grokking time is defined as $T_{gr}(\epsilon) = T_{te}(\epsilon) - T_{tr}(\epsilon)$.
\end{definition}

\textbf{Remark 1:} To the best of our knowledge, there is no single definition of grokking in the literature. However, in the light of our definition for grokking \cite{levi, manningcoe} can be said to use $\epsilon = 0.05$ and \cite{zunkovic} use $\epsilon = 0$.

\textbf{Remark 2:} Similarly, the usage of the term `delay' is very subjective. \cite{power} observed $\zeta = 1000$, \cite{gromov} observed $\zeta = 4$, \cite{liu2022} observed grokking on MNIST with $\zeta = 100$ and \cite{thilak} observed $\zeta = 300$. The above definition encompasses all observations and makes precise our intended `delay' threshold. In this paper, unless $\zeta$ is of the order of $100$ or more we do not consider the training dynamics as representing grokking.

\begin{figure*}[!t]
    \centering
    \includegraphics[width=0.9\linewidth]{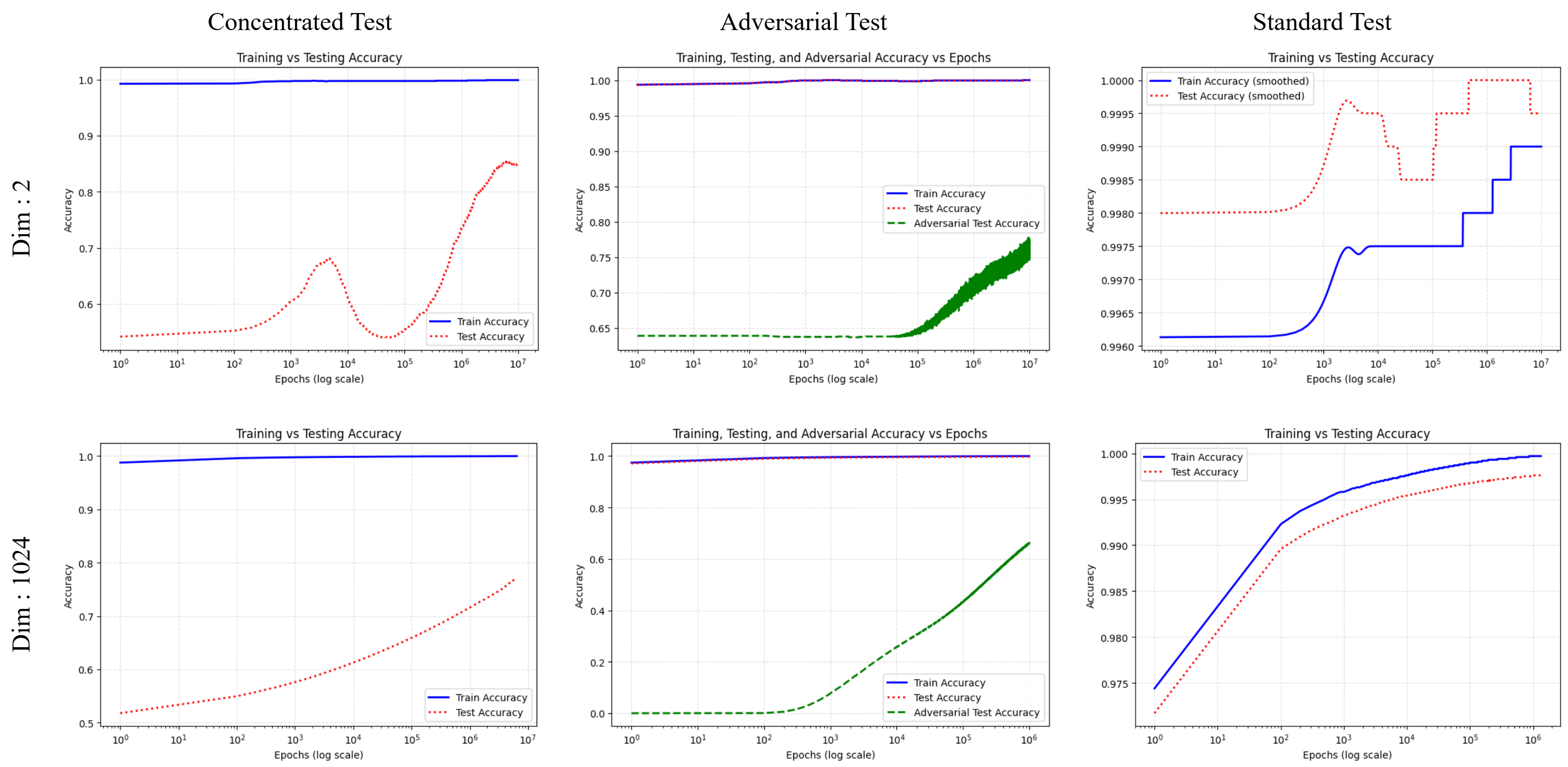}
    \caption{\textbf{Grokking with a learnable bias}. We analyze grokking in three scenarios: (1) when the test data is drawn from the same distribution as the test data (right); (2) when the test data is distributed around the optimal separator (sensitive distribution) (left); (3) adversarial test data generated via \textit{Project Gradient Descent} attacks (center). We see that the model groks on the concentrated and adversarial test data whereas this is not observed in the standard test data.}
    \label{fig:fig1}
\end{figure*}

\textbf{Model:} We consider a linear model with bias whose prediction for input $\x_i\in\R^d$ is given by $\hat{y}(\x_i;w,b)=w^\top \x_i+b$, where $w\in\R^d$ and $b\in\R$ are the weight vector and bias respectively.

\textbf{Loss:} We are interested in minimizing the following loss function
\begin{align}
    \mathcal{L}(\w,b) = \sum_{i=1}^N l(y_i(\w^\top\x_i+b))
\end{align}
where $l(x) = \log(1+\exp({-x}))$ is the logistic loss. 

\textbf{Gradient Update:} We use gradient descent to optimize this loss using
\begin{align} \label{graddesc}
    \w(t+1) = \w(t) - \eta \nabla_{\w} \mathcal{L}(\w, b)\\
    b(t+1) = b(t) - \eta \nabla_{b} \mathcal{L}(\w, b).
\end{align}


\section{Model Bias Learns in Three Phases}\label{sec4}
We begin by recalling a key aspect of our setup: while the training data are linearly separable through the origin and hence admit a separator with zero bias, the model we train includes a learnable bias parameter. When optimizing the logistic loss with gradient descent under this mismatch, several natural and interrelated questions arise about the resulting training dynamics. 

First, how does the weight vector \(\w(t)\) evolve over time? This question has been thoroughly analyzed by \citet{soudry}, and we briefly recall the relevant results in Section \eqref{priorresults}. Second, how does the bias term \(b(t)\) evolve under gradient descent? Our main theoretical contribution answers this question; we characterize the dynamics of \(b(t)\) and its asymptotic behavior in Section \eqref{ourtheory}, culminating in Theorem \eqref{thm_main_1}. 

Third, how does the interaction between the dynamics of \(\w(t)\) and \(b(t)\) influence generalization and give rise to grokking? In Sections \eqref{4.3}, \eqref{4.4} and \eqref{4.5}, we address this question by arguing---using Theorem \eqref{thm_main_1}---that learning proceeds through three distinct phases: an initial population-dominated phase, a support-vector-dominated unlearning phase, and a final phase in which both \(\w(t)\) and \(b(t)\) converge toward the generalizing solution.

Finally, how long does this process take? In particular, what determines the grokking time, defined as the delay between the attainment of near-perfect training accuracy and near-optimal test accuracy? We show that grokking time is governed by the late-time dynamics of the bias term during the support-vector-dominated phases, and we make this dependence explicit in Section \eqref{tgroksec}.



\subsection{Prior Results from \citet{soudry}}\label{priorresults}
\begin{theorem}[Rephrased from Theorem 3 of \cite{soudry}]
\label{soundry1}
For any smooth monotonically decreasing loss function with an exponential tail, and for small learning rate, gradient descent iterates follow
\begin{align}\label{soudrymain}
    \w(t) = \hat{\w}\log(t) + \rho(t)
\end{align}
where $\hat{w}$ is given by
\begin{align} \label{diffsvm}
    \hat{\w} = \arg \min ||\w||^2 + b^2  \text{   s.t.   } y_i(\w^\top\x+b)\geq 1
\end{align}
and $\rho(t)$ has a bounded norm.
\end{theorem}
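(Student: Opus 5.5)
The plan is to reduce the statement to the homogeneous case treated by \citet{soudry} by augmenting the data. Set $\tilde{\x}_i = (\x_i, 1)\in\R^{d+1}$ and $\tilde{\w} = (\w, b)$. Then $\w^\top\x_i + b = \tilde{\w}^\top\tilde{\x}_i$, so the loss $\mathcal{L}(\w,b)$ becomes $\sum_i l(y_i\tilde{\w}^\top\tilde{\x}_i)$. Likewise, the updates in \eqref{graddesc} are exactly plain gradient descent on $\tilde{\w}$. The augmented data remain linearly separable through the origin: by Assumption~\ref{ass:sepmargin}, $\tilde{\w}=(\w_*,0)$ gives $y_i\tilde{\w}^\top\tilde{\x}_i\ge\gamma/2>0$. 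The augmented points are bounded, with $\|\tilde{\x}_i\|^2 < R^2+1$ by Assumption~\ref{ass:support}. The logistic loss is smooth, monotone decreasing, and has an exponential tail, $l(u)\sim e^{-u}$, with $-l'(u)=1/(1+e^{u})$ behaving like $e^{-u}$ up to $(1+e^{-\mu u})$ corrections. So every hypothesis of the homogeneous theorem in \citet{soudry} holds for $\{(\tilde{\x}_i,y_i)\}$.

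Applying that theorem with step size $\eta < 2\beta^{-1}\sigma_{\max}^{-2}(\tilde X)$, where $\beta$ is the smoothness constant of $l$, gives
\[
\tilde{\w}(t) = \hat{\tilde{\w}}\log t + \tilde\rho(t),
\]
with $\|\tilde\rho(t)\|$ bounded. Here $\hat{\tilde{\w}}=\arg\min\|\tilde{\w}\|^2$ subject to $y_i\tilde{\w}^\top\tilde{\x}_i\ge 1$. Unpacking $\tilde{\w}=(\w,b)$, this program is exactly \eqref{diffsvm}: minimize $\|\w\|^2+b^2$ subject to $y_i(\w^\top\x_i+b)\ge1$. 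Reading off the first $d$ coordinates yields \eqref{soudrymain}, with $\rho(t)$ the first block of $\tilde\rho(t)$, which is bounded because $\tilde\rho$ is. The last coordinate also gives $b(t)=\hat b\log t+O(1)$ as a byproduct, which the paper presumably uses later.

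The main obstacle is the technical condition in \citet{soudry}'s Theorem~3 guaranteeing a bounded (rather than merely $O(\log\log t)$) residual. That condition is that the support vectors of the augmented SVM span the data, or more precisely that the dual variables are strictly positive, which holds for almost every dataset. I would state it as a genericity assumption inherited from \citet{soudry}; otherwise the residual is only $o(\log t)$, and the statement should be read as holding under that generic condition. A secondary point is that the step size and smoothness constants now depend on $\tilde X$ rather than $X$. This only changes ``small learning rate'' by a factor controlled by $R^2+1$.
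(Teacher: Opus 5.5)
Your proposal is correct, and it is the reduction the paper implicitly relies on: the paper gives no proof beyond citing Theorem~3 of \citet{soudry}, and the $\|\w\|^2+b^2$ objective in \eqref{diffsvm} is exactly the homogeneous SVM for the augmented points $(\x_i,1)$. Your caveat is also right and worth keeping: \citet{soudry} only guarantee a bounded residual for almost all datasets (in general only $\|\rho(t)\|=O(\log\log t)$), a qualifier the rephrased statement drops; since augmented datasets form a measure-zero slice, it is cleaner to impose the non-degeneracy condition (strictly positive dual variables) directly, as you propose, and note that the spanning condition you first mention belongs to their Theorem~4 (convergence of $\rho(t)$), not to boundedness.
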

 We would like to emphasize that this is not the hard margin SVM solution $\w_{SVM}$ in general. However with assumption \eqref{ass:sepmargin}, they are equal.

To analyze the training dynamics and show grokking, we first describe the convergence of $\rho(t)$.

\begin{theorem}[Rephrased from Theorem 4 of \cite{soudry}] \label{soudrymain2}
If the support vectors span the dataset, and $\hat{\w} = \sum_\mathcal{S} \alpha_i y_{s_i}\x_{s_i}$ then $\lim_{t \to \infty} \rho(t) = \tilde{\w}$ where 
\begin{align}
    \eta \exp(-\x_{s_i}^\top\tilde{\w}) = \alpha_i
\end{align}
\end{theorem}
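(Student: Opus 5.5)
We prove \Cref{soudrymain2} by a residual analysis around the trajectory from \Cref{soundry1}. Absorb the bias into an augmented vector ($\x\mapsto(\x,1)$, $\w\mapsto(\w,b)$) and write $\w(t)=\hat{\w}\log t+\tilde{\w}+r(t)$, where $\tilde{\w}$ is defined by $\eta\exp(-\x_{s_i}^\top\tilde{\w})=\alpha_i$. Throughout, $y_i$ is absorbed into $\x_i$, as in \cite{soudry}. Such a $\tilde{\w}$ exists because the support vectors span the data: the equations $\x_{s_i}^\top\tilde{\w}=\log(\eta/\alpha_i)$ form a consistent linear system on the span of $\mathcal{S}$. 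The goal is to show $r(t)\to 0$. By \Cref{soundry1} we already know $r(t)$ is bounded, so the task is to upgrade boundedness to convergence.

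\textbf{Step 1 (expansion of the update).} Using the exponential tail $l'(u)\approx -e^{-u}$, the gradient step gives
\begin{align*}
r(t+1)-r(t) = -\hat{\w}\log\!\left(1+\tfrac1t\right)+\eta\sum_i e^{-\x_i^\top\w(t)}\x_i+\text{h.o.t.}
\end{align*}
For a support vector, $\x_{s_i}^\top\hat{\w}=1$, so $\eta e^{-\x_{s_i}^\top\w(t)}=\alpha_i t^{-1}e^{-\x_{s_i}^\top r(t)}$. For a non-support vector, $\x_i^\top\hat{\w}\ge 1+\theta$ for some $\theta>0$, so its contribution is $O(t^{-1-\theta})$, which is summable. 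Since $\hat{\w}=\sum\alpha_i\x_{s_i}$ and $\log(1+1/t)=1/t+O(t^{-2})$, this reduces to
\begin{align*}
r(t+1)-r(t)=\frac1t\sum_{i\in\mathcal S}\alpha_i\left(e^{-\x_{s_i}^\top r(t)}-1\right)\x_{s_i}+O(t^{-1-\theta'}).
\end{align*}

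\textbf{Step 2 (Lyapunov argument).} Take $V(t)=\|r(t)\|^2$. The cross term is
\begin{align*}
\frac{2}{t}\sum_i\alpha_i\left(e^{-z_i}-1\right)z_i\le 0,\qquad z_i=\x_{s_i}^\top r(t),
\end{align*}
and it is strictly negative unless every $z_i=0$. Because $r$ is bounded, the quadratic term is $O(t^{-2})$ and the error terms are summable. Hence $V(t+1)-V(t)\le -\frac{c}{t}\,\phi(r(t))+\epsilon_t$, where $\phi(r)=\sum_i\alpha_i(1-e^{-z_i})z_i\ge0$ and $\sum_t\epsilon_t<\infty$. This yields two consequences. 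First, $V(t)$ converges. Second, $\sum_t \phi(r(t))/t<\infty$, so $\phi(r(t))\to0$ along a subsequence. On that subsequence the components of $r$ in $\mathrm{span}(\mathcal S)$ tend to $0$.

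\textbf{Step 3 (from subsequence to full convergence).} Components of $r$ orthogonal to $\mathrm{span}(\mathcal S)$ only move through non-support-vector terms, and those terms lie in the span of the data, which equals the span of $\mathcal S$ by hypothesis. So the orthogonal components are constant, and we choose $\tilde{\w}$ to absorb them. It remains to show the in-span part converges to $0$ rather than merely oscillating. The increments of $r$ are $O(1/t)$, and $\phi$ is positive definite on the span. Suppose $\|r\|$ stays at least $\delta$ on a set of times with divergent $\sum 1/t$. Then $\sum\phi(r(t))/t=\infty$, a contradiction. Combining this with the convergence of $V$ forces $\lim V=0$.

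\textbf{Main obstacle.} The delicate step is controlling the higher-order terms uniformly. These include the deviation of $l'$ from a pure exponential and the step-size condition making $\eta$ small. The aim is to show they contribute only $O(t^{-1-\theta'})$ and do not spoil summability. This needs the tail assumption with explicit rates, as in \cite{soudry}. I would first verify it for the logistic loss, where $-l'(u)=e^{-u}+O(e^{-2u})$ holds explicitly.
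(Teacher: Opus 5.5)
The paper does not prove this statement; it imports Theorem 4 of \citet{soudry}. Your Steps 1--3 reproduce essentially their argument. You set $r(t)=\w(t)-\hat{\w}\log t-\tilde{\w}$ and show that $\|r(t+1)\|^2-\|r(t)\|^2$ equals a nonpositive cross term $\tfrac{2}{t}\sum_i\alpha_i(e^{-z_i}-1)z_i$ plus a summable error. Then $\sum_t 1/t=\infty$ forces the limit to be zero. That machinery is sound, and Step 3 can be shortened: once $\tilde{\w}$ is chosen so that $r(t)\in\mathrm{span}(\mathcal S)$ for all $t$, $V$ converges and $V(t_k)\to0$ on your subsequence, so $V\to0$.

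\textbf{The gap.} The gap is the existence of $\tilde{\w}$, on which the cancellation in Step 1 rests, and your justification for it is false. That the support vectors span the data says nothing about whether the $|\mathcal S|$ equations $\x_{s_i}^\top\tilde{\w}=\log(\eta/\alpha_i)$ are consistent. Consistency needs $(\log(\eta/\alpha_i))_i$ to lie in the range of $X_{\mathcal S}^\top$, where $X_{\mathcal S}$ has the support vectors as columns. It also needs every $\alpha_i>0$ for the right-hand side to be finite.

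For a concrete failure, absorb labels, drop the bias, and take $\x_1=(1,0)$, $\x_2=(0,1)$, $\x_3=(\tfrac12,\tfrac12)$. Then $\hat{\w}=(1,1)$, and all three points are support vectors, so they trivially span the data. Every admissible $\alpha$ (i.e.\ $\alpha\ge0$ with $\sum_i\alpha_i\x_{s_i}=\hat{\w}$) has the form $\alpha=(1-\tfrac s2,1-\tfrac s2,s)$ with $s\in[0,2]$. Since $\x_3=\tfrac12(\x_1+\x_2)$, consistency forces $\alpha_3=\sqrt{\alpha_1\alpha_2}$, i.e.\ $s=2/3$. For every other admissible $\alpha$ no $\tilde{\w}$ exists. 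Indeed, gradient descent from $\w(0)=0$ gives $\w(t)\approx\log(\tfrac{3\eta}{2}t)\,(1,1)$, which matches only $s=2/3$. So the statement, read with an arbitrary representing $\alpha$, is false, and your existence claim with it.

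\textbf{The fix.} What is missing is the hypothesis the paper's rephrasing drops: the ``almost every dataset'' condition of \citet{soudry}. It guarantees that the support vectors are linearly independent and that every $\alpha_i>0$.
\begin{itemize}
\item Linear independence makes $X_{\mathcal S}^\top$ map $\mathrm{span}(\mathcal S)$ bijectively onto $\R^{|\mathcal S|}$. So $\tilde{\w}$ exists and is unique up to a component in $\mathrm{span}(\mathcal S)^\perp$, which you then fix as in Step 3.
\item Strict positivity of every $\alpha_i$ is used silently in Steps 2--3, where you assert that the cross term is strictly negative unless all $z_i=0$ and that $\phi$ is positive definite on $\mathrm{span}(\mathcal S)$. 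A support vector with $\alpha_i=0$ breaks both.
\end{itemize}
With these two facts made explicit your argument is complete. The higher-order terms you flag as the main obstacle are routine, as your own $O(t^{-2})$ and $O(t^{-1-\theta})$ bounds show.

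Without linear independence, you would first have to identify which admissible $\alpha$ the dynamics select, and then run your argument with that $\alpha$. The selected one has $\log(\alpha/\eta)$ in the range of $X_{\mathcal S}^\top$. It can be obtained as the minimizer of $\sum_i\alpha_i(\log(\alpha_i/\eta)-1)$ over the admissible set, provided a strictly positive admissible $\alpha$ exists. This is not pedantic here: the paper's experiments plant hundreds of support vectors in $d=2$, squarely in the linearly dependent regime.
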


\subsection{Our Theoretical Result On Learning Phases} \label{ourtheory}
\begin{theorem}
    Under assumptions \eqref{ass:sepmargin} and \eqref{ass:supportvecs}, the SVM solution given by
    \begin{align} \label{svm}
    \w_{SVM} = \arg \min ||\w||^2  \text{   s.t.   }y_i(\w^\top\x+b)\geq 1 
\end{align}
is equal to $\hat{\w}$.
\end{theorem}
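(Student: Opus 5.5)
We compare the two optimization problems directly. Problem \eqref{svm} minimizes $\|\w\|^2$ over pairs $(\w,b)$, with $b$ free and not penalized. Problem \eqref{diffsvm} minimizes $\|\w\|^2+b^2$ over the same feasible set
\[
F=\{(\w,b): y_i(\w^\top\x_i+b)\geq 1 \ \forall i\}.
\]
The plan is to show that both problems have the same minimizer, namely $(\w_{SVM},0)$. That gives $\hat{\w}=\w_{SVM}$.

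\textbf{Step 1: properties of \eqref{svm}.} By Assumption~\eqref{ass:sepmargin}, $F$ is nonempty; for instance $(2\w_*/\gamma,0)\in F$. It is also a closed convex set. Its $\w$-projection $\{\w:\exists b,\ (\w,b)\in F\}$ is convex and does not contain $0$, since $\w=0$ would force $y_ib\geq 1$ for both labels. This uses that both classes are present, which I take as implicit in the setup.

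A minimizer of $\|\w\|^2$ over this projection therefore exists. It is unique because $\|\cdot\|^2$ is strictly convex. For that fixed $\w$, the set of feasible $b$ is a closed interval, which is bounded because both labels occur. Assumption~\eqref{ass:sepmargin} states precisely that every minimizer $(\w_s,b_s)$ of \eqref{svm} has $b_s=0$. Hence the minimizer set of \eqref{svm} is exactly $\{(\w_{SVM},0)\}$.

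\textbf{Step 2: the comparison inequality.} Let $(\w,b)\in F$ be arbitrary. By Step 1, $\|\w\|^2\geq\|\w_{SVM}\|^2$. Adding $b^2\geq 0$ gives
\[
\|\w\|^2+b^2\;\geq\;\|\w_{SVM}\|^2=\|\w_{SVM}\|^2+0^2 .
\]
The point $(\w_{SVM},0)$ lies in $F$, so it attains this lower bound. It is therefore a minimizer of \eqref{diffsvm}.

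For uniqueness, suppose equality holds at some $(\w,b)\in F$. Then both $b=0$ and $\|\w\|^2=\|\w_{SVM}\|^2$ must hold. So $(\w,0)$ is a minimizer of \eqref{svm}, and Step 1 forces $\w=\w_{SVM}$. The objective of \eqref{diffsvm} is also strictly convex on a convex set, which gives uniqueness independently.

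\textbf{Step 3: conclusion.} The unique solution of \eqref{diffsvm} is $(\w_{SVM},0)$, so $\hat{\w}=\w_{SVM}$. We can also express this in the support-vector form of Assumption~\eqref{ass:supportvecs}. From the KKT conditions of the hard-margin problem at zero bias, $\hat{\w}=\w_{SVM}$ is parallel to $\w_*$, with $\w_{SVM}=\w_*/\|\w_{SVM}\|^{-1}$ after rescaling so that the margin equals $1$. Only the equality itself is required here.

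\textbf{Main obstacle.} The mathematics is short. The subtle point is that Assumption~\eqref{ass:sepmargin} is stated with an unpenalized bias. We must read it as saying that every minimizer of \eqref{svm} has zero bias, rather than only that some minimizer does.

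If only existence of a zero-bias minimizer were assumed, Step 1 would still give a unique $\w_{SVM}$. Step 2 would then still select $b=0$ among the feasible biases for that $\w$, because $b^2$ is minimized at $0$ and $0$ is feasible. So the conclusion survives either reading. I would state this remark explicitly to make the argument robust.
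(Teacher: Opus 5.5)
Your proposal is correct and is essentially the paper's proof: both note that \eqref{svm} and \eqref{diffsvm} have the same feasible set, use the chain $\|\w\|^2+b^2\ge\|\w\|^2\ge\|\w_{SVM}\|^2+0^2$ with equality at $(\w_{SVM},0)$, and get uniqueness from strict convexity of $\|\w\|^2+b^2$. Your extra material is not needed for the argument. The existence claim in Step 1 needs the $\w$-projection of the polyhedron $F$ to be closed, not just convex; this holds because projections of polyhedra are polyhedra. The Step 3 claim that $\w_{SVM}$ is parallel to $\w_*$ does not follow from the assumptions as stated, but the proof never uses it.
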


\begin{proof}
    From \eqref{ass:sepmargin} and \eqref{ass:supportvecs}, the solution to \eqref{svm} is some $(\w_{SVM}, 0)$. Hence for all feasible $(\w, b)$ from the optimality of $(\w_{SVM}, 0)$, we have $||\w||^2 \geq ||\w_{SVM}||^2$. Note that both problems \eqref{diffsvm} and \eqref{svm} have the same feasible region. Hence for all feasible $(\w, b)$ for \eqref{diffsvm}, we also have $||\w||^2+b^2 \geq ||\w||^2 \geq  ||\w_{SVM}||^2+0^2$ implying that $(\w_{SVM}, 0)$ also optimizes \eqref{diffsvm}. Since \eqref{diffsvm} is a strictly convex optimization problem it has a unique solution. Hence $\hat{\w} = \w_{SVM}$.
\end{proof}

As discussed above, the learning dynamics can be naturally divided into three phases. 
Phase~1 is dictated by the entire training dataset and is fully characterized by the results of \citet{soudry}, which we recalled in Section \eqref{priorresults}. 
Phases~2 and~3 occur after the convergence of the residual term \(\rho(t)\) to its limiting value \(\tilde{\w}\). 
While in reality the dynamics evolve continuously and the transitions between these phases are gradual, for pedagogical clarity we idealize the process by assuming the existence of a finite time \(t_0\) that marks the end of Phase~1. 
This idealization is captured in the following assumption.

\begin{assumption}[Late-time dynamics]\label{latetimedyn}
There exists a time \(t_0\) such that the following hold:
\begin{enumerate}
    \item \(\rho(t) = \tilde{\w}\) for all \(t \ge t_0\).
    \item For all \(\x_i \notin \mathcal{S}\), we have
    \[
        \exp\!\big(-\w(t)^\top \x_i\big) = 0
        \qquad \text{for all } t \ge t_0,
    \]
    where \(\mathcal{S}\) denotes the set of support vectors.
    \item The weight vector has converged in direction: $$\frac{\w(t)}{||\w(t)||} = \frac{\hat{\w}}{||\hat{\w}||}.$$
\end{enumerate}
\end{assumption}

\begin{theorem}\label{thm_main_1}
    Suppose $\mathcal{S}^+$ and $\mathcal{S}^-$ represent all the support vectors which belong to class $1$ and $-1$ respectively. Suppose we use the exponential loss $l(x) = \exp({-x})$. Define $A_S^+ = \sum_{i\in S^+} \exp({-\tilde{\w}^{\top}\x_i})$ and $A_S^- = \sum_{i\in S^-} \exp({\tilde{\w}^{\top}\x_i})$ where $\tilde{\w}$ is defined above. Define $\delta = \sqrt{\frac{A_S^-}{A_S^+}}$ and $g(t)$ as
    \begin{align*}
       g(t)=
2\sqrt{A_S^+A_S^-}\log \left(\frac{t}{t_0}\right)
    \end{align*} where $t_0$ is the time at which the late time dynamics begin. In this regime, we assume that $\rho(t)$ has converged to $\tilde{\w}$. Further, only support vectors contribute to the gradient. Then the late time gradient flow dynamics of the bias follows
    \begin{align}\label{biasdyn}
        \dv{b}{t} = \frac{1}{t}\left( A_S^+ \exp({-b}) - A_S^- \exp({b}) \right).
    \end{align}
    Integrating, $b(t)$ evolves as
   \begin{align}\label{biasevol}
       b(t) = \log\!\left(
\frac{(1+\delta e^{b_{0}})\exp({g(t)})-(1-\delta e^{b_{0}})}
{(1+\delta e^{b_{0}})\exp({g(t)})+(1-\delta e^{b_{0}})}
\right) - \log \delta.
    \end{align}

    Consequently, $b(t)$ converges to $b_\infty = -\log \delta$.
\end{theorem}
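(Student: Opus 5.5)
Under the continuous-time idealization, gradient descent with step $\eta$ becomes gradient flow $\dot{b} = -\nabla_b \mathcal{L}$. For the exponential loss,
\[
-\nabla_b \mathcal{L} = \sum_i y_i \exp\!\big(-y_i(\w^\top\x_i + b)\big).
\]
The proof has two parts: reduce this sum to the support vectors, which yields \eqref{biasdyn}, and then integrate the resulting scalar ODE, which yields \eqref{biasevol}.

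\textbf{Step 1: the ODE.} By Assumption~\ref{latetimedyn}(2), every non-support-vector term vanishes for $t \ge t_0$. This leaves only the sums over $\mathcal{S}^+$ and $\mathcal{S}^-$.

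For a support vector we substitute $\w(t) = \hat{\w}\log t + \tilde{\w}$, using Theorem~\ref{soundry1} together with Assumption~\ref{latetimedyn}(1). By Assumption~\ref{ass:sepmargin} and the preceding theorem ($\hat{\w} = \w_{SVM}$ with zero bias), each support vector satisfies $y_{s_i}\hat{\w}^\top\x_{s_i} = 1$. Hence
\[
\exp\!\big(-y_i \w(t)^\top \x_i\big) = \tfrac{1}{t}\exp\!\big(-y_i\tilde{\w}^\top\x_i\big).
\]
Positive support vectors contribute $\tfrac{1}{t}e^{-b}\exp(-\tilde{\w}^\top\x_i)$, and negative ones contribute $-\tfrac{1}{t}e^{b}\exp(\tilde{\w}^\top\x_i)$. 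Summing gives exactly $\dv{b}{t} = \tfrac1t\big(A_S^+e^{-b} - A_S^-e^{b}\big)$. Any constant factor $\eta$ is absorbed into the time scale, consistent with the statement.

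\textbf{Step 2: integration.} Substitute $s = \log(t/t_0)$ and $u = e^{b}$. Since $\dv{u}{s} = u\,\dv{b}{s}$, the equation becomes the Riccati-type separable ODE
\[
\dv{u}{s} = A_S^+ - A_S^- u^2.
\]
Rescale with $v = \delta u$, where $\delta = \sqrt{A_S^-/A_S^+}$. Then $A_S^+ - A_S^- u^2 = A_S^+(1 - v^2)$, so
\[
\dv{v}{s} = \sqrt{A_S^+A_S^-}\,(1 - v^2).
\]
Partial fractions give
\[
\tfrac12 \log\frac{1+v}{1-v}\bigg|_{v_0}^{v} = \sqrt{A_S^+A_S^-}\, s,
\]
so $\frac{1+v}{1-v} = \frac{1+v_0}{1-v_0}\,e^{g(t)}$, with $g(t) = 2\sqrt{A_S^+A_S^-}\log(t/t_0)$ and $v_0 = \delta e^{b_0}$. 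Solving for $v$ gives
\[
v = \frac{(1+v_0)e^{g} - (1-v_0)}{(1+v_0)e^{g} + (1-v_0)},
\]
and $b = \log v - \log\delta$ reproduces \eqref{biasevol}. As $t \to \infty$ we have $g \to \infty$, so $v \to 1$ and $b \to -\log\delta$.

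\textbf{Main obstacle.} The delicate points are in Step 2.

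First, the case $v_0 > 1$. Here $1 - v$ is negative and the logarithm must be taken of absolute values. I would verify that the closed form still holds by direct differentiation of \eqref{biasevol}, rather than relying on the separation-of-variables derivation. This also covers the equilibrium case $v_0 = 1$.

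Second, positivity. The argument of the logarithm in \eqref{biasevol} must stay positive for all $t \ge t_0$. This holds because $e^{g} \ge 1$ and $1 + v_0 > |1 - v_0|$ for $v_0 > 0$, which keeps both numerator and denominator positive.

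In Step 1, the key fact to state explicitly is that the functional margins of the support vectors equal exactly $1$. This is what makes the prefactor exactly $1/t$.
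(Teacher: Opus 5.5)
Your proposal is correct and follows the same route as the paper. You restrict the bias gradient of the exponential loss to the support vectors, use the active margin $y_{s_i}\hat{\w}^\top\x_{s_i}=1$ to extract the $1/t$ factor, and then separate variables with $u=e^{b}$ and partial fractions. Your extra care does not change the argument but tightens it: you write the margin with $y_{s_i}$ where the paper writes $\hat{\w}^\top\x_i=1$, and you check the $\delta e^{b_0}\ge 1$ case and the positivity of the logarithm's argument, which the paper handles only by silently dropping absolute values.
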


\begin{proof}[\textbf{Proof sketch:}]






We now provide a sketch of the proof (the complete proof can be found in the \Cref{rig_main_thm_1}). The exponential loss $\mathcal{L}(\w,b) = \sum_i \exp({-y_i(\w^\top \x_i + b)})$ approximates the logistic loss in the late time dynamics. Then gradient descent \eqref{graddesc} takes the form
\begin{align*}
    b(t+1) = b(t) + \eta \sum_i y_ie^{-y_i(\w(t)^\top \x_i + b)}
\end{align*}
For small enough learning rates, this approaches gradient flow given by
\begin{align}\label{eq12}
    \dv{b}{t} &= \sum_i y_i e^{-y_i(\w(t)^\top \x_i + b)} \\
    &= \sum_{y_i = +1}  e^{-(\w(t)^\top \x_i + b)} - \sum_{y_i = -1} e^{(\w(t)^\top \x_i + b)} \\ \label{biasdiffeq}
    &= \left(\sum_{y_i = +1}  e^{-\w(t)^\top \x_i}\right) e^{-b} -\left(\sum_{y_i = -1}  e^{\w(t)^\top \x_i}\right) e^{b}
\end{align}
For late time dynamics, $\rho(t)$ has converged to $\tilde{\w}$. We also have $\w(t) = \hat{\w} \log t + \rho(t)$. Substituting, we get equation \eqref{biasdyn} (since only support vectors contribute). Integrating \eqref{biasdyn} gives us \eqref{biasevol}.
\end{proof}

\textbf{Observation:} \Cref{biasdiffeq} carries rich meaning. Define $A^+= \sum_{y_i = +1}  e^{-\w(t)^\top \x_i}$ and $A^- = \sum_{y_i = -1}  e^{\w(t)^\top \x_i}$.

For a dataset with balanced classes, $A^+ \approx A^-$. This is because the training population is symmetric on an average and adding the cost $e^{\pm \w(t)^\top \x_i}$ over each class for every data point will yield similar values. Following a similar argument, for a dataset with class imbalance towards $+1$ ($-1$), $A^+ > A^-$ ($A^+ > A^-$). 

Equipped with the above observation we now discuss the Phases 1, 2 and 3.

\subsection{Phase 1: Population dominated}\label{4.3}

In the early dynamics of \eqref{biasdiffeq}, $A^+$ and $A^-$ change with time as $\w$ evolves. We also observe from Theorem \eqref{soudrymain} that since $||\w||$ evolves approximately as $\mathcal{O}(\log t)$, and both $A^+$ and $A^-$ depend on (largely positive) inner products with $\w$, on an average both decrease with time. Now for different class sizes in the dataset, we have different possible evolutions of the early time dynamics.

\begin{itemize}
    \item[1.] \textbf{Classes balanced}: In this case, $A^+ \approx A^-$. Hence, on an average $\dv{b}{t} \propto e^{-b} - e^b$. This quickly brings $b$ to $0$ regardless of where it is initalized. Hence throughout the early time dynamics, $b$ remains near zero.
    \item[2.] \textbf{Class imbalance towards $+1$}: In this case $A^+ > A^-$. Now the early time dynamics no longer force $b$ to $0$. Instead, the steady state solution is $b = 0.5\log (A^+/A^-) > 0$. Hence regardless of where it is initalized, the bias is pulled to this value.
    \item[3.] \textbf{Class imbalance towards $-1$}: In this case $A^+ < A^-$. The same reasoning as above holds. Now the steady state solution in the early time dynamics is $b = 0.5\log (A^+/A^-) < 0$.
\end{itemize}

\subsection{Phases 2 and 3: Support Vector dominated}\label{4.4}

In the late time dynamics of \eqref{biasdiffeq}, all but the support vectors contribute to nothing in $A^+$ and $A^-$, and they converge to $A_S^+$ and $A_S^-$ respectively. Similar to the observation about $A^+$ and $A^-$, we make the observation that $A_S^+$ and $A_S^-$ decay with time. This decay is $\mathcal{O}(1/t)$ as explicitly seen in this learning phase from applying \eqref{soudrymain} to \eqref{biasdiffeq} (see \eqref{biasdyn}).

Now, regardless of class balance, the support vectors in the training data could be symmetrically or asymmetrically (significantly more support vectors belonging to one class than the other) balanced. Symmetric balance does not necessarily imply distributional symmetry but for large enough number of support vectors this is approximately true.

In each of these cases the late time dynamics evolves differently.

\begin{itemize}
    \item[1.] \textbf{Symmetrically balanced support vectors}: In this case, $A_S^+ \approx A_S^-$. Similar to the first case in early time dynamics, $\dv{b}{t} \propto e^{-b} - e^b$. Regardless of the population class balance, this regime forces $b$ to $0$. However, $\dv{b}{t}$ could decay quickly and gradient's dying attempt to push it to $0$ might be unsuccessful.
    \item[2.] \textbf{Asymmetrically more $+1$ class support vectors}: In this case, $A_S^+ > A_S^-$. The late time dynamics therefore force $b$ to settle at $b_\infty = 0.5\log (A_S^+/A_S^-) > 0$ regardless of the early time dynamics. Note that unlike the steady state achieved in the early time dynamics, the bias need not necessarily reach this value (the gradient could vanish before then).
    \item[3.] \textbf{Asymmetrically more $-1$ class support vectors}: In this case, $A_S^+ < A_S^-$. Hence, $b_\infty = 0.5\log (A_S^+/A_S^-) < 0$. The above discussion motivates defining a useful quantity used in Theorem \eqref{thm_main_1}: $\delta = \sqrt{A_S^-/A_S^+}$.
\end{itemize}

The above discussion intuitively explains why the late time limit of \eqref{biasevol} is $b_\infty = -\log \delta$.

\subsection{Effect of learning phases on grokking}\label{4.5}

Now we analyze the effect of these learning phases on the test accuracy.

\begin{itemize}
    \item \textbf{Population dominated phase}: In the early time dynamics $\w$ converges to $\hat{\w}$ in direction which causes some increase in test accuracy. The bias either remains at $0$ for class balanced datasets or goes to $b = 0.5\log (A^+/A^-)$ for class imbalanced datasets. This slightly worsens the accuracy but the effect of both these dynamics in tandem keeps the test accuracy roughly the same.
    \item \textbf{Support Vector dominated unlearning regime}: In contrast to the previous phase, in the late time dynamics $\w$ no longer rotates. The asymmetry between $A_S^+ e^{-b}$ and $A_S^- e^b$ drives $b$ away from the generalizing solution $b=0$. The test performance therefore definitively worsens in this regime. We call this the unlearning regime.
    \item \textbf{Support Vector dominated generalization regime}: While $b$ converges to $0.5\log (A_S^+/A_S^-)$, notice that $||\w||$ increases logarithmically. Since the hyperplane is parallel to the optimal maximum margin seperator, test performance is dependent only on the distance of the hyperplane from the origin (i.e. $|b(t)|/||\w(t)||$), which goes to zero as $\mathcal{O}(1/\log t)$. We call this the generalization regime.
\end{itemize}

\subsection{Our Theoretical Result on Grokking Time}\label{tgroksec}
\begin{figure*}[t]
    \centering
    \includegraphics[width=0.79\linewidth]{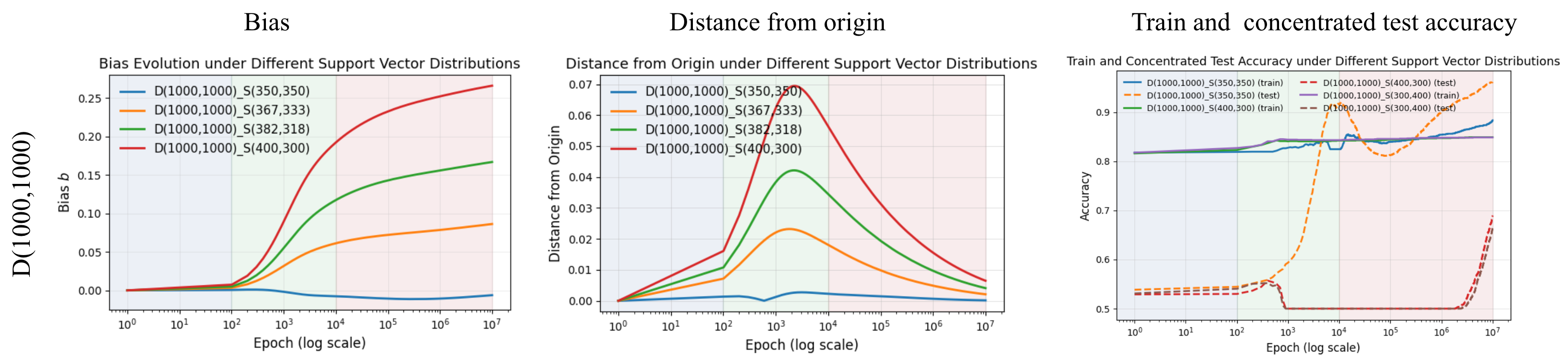}
    \caption{\textbf{Support Vector distributions affect grokking time.} (a) Bias (left) (b) Distance from origin (middle) (c) train and concentrated test accuracy (right) versus number of epochs under different numbers of positive and negative support vectors, where D(1000,1000) mean 1000 positive class points and 1000 negative class points. Similarly S(350,350) means 350 support vectors for positive class and 350 support vectors for negative class. The three shaded regions refer to phase 1, phase 2, phase 3 respectively in all the plots. }
    \label{fig:4}
\end{figure*}

\begin{theorem}\label{thm_main_3}
    There exists a small enough $\epsilon_0$ such that $\forall \epsilon < \epsilon_0$ we have the following bounds on $ T_{te}(\epsilon)$:
    \begin{align}
        T_{te}(\epsilon) \leq \exp \left( \frac{\gamma B}{2} + \frac{|\log \delta|}{1 + 2\epsilon(2\alpha-1)}\right)\\
        T_{te}(\epsilon) \geq \exp \left( -\frac{\gamma B}{2} + \frac{|\log \delta|}{1 + 2\epsilon(2\alpha-1)}\right)
    \end{align}
    Therefore for a small enough $\gamma$ we have
    \begin{align*}
         T_{te}(\epsilon) = \exp \left(\frac{|\log \delta|}{1 + 2\epsilon(2\alpha-1)}\right)
    \end{align*}
\end{theorem}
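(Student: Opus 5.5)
The plan is to reduce the test-accuracy criterion to a condition on the signed distance of the learned hyperplane from the origin, and then invert the late-time bias and weight dynamics. By Assumption~\ref{latetimedyn}, for $t\ge t_0$ the weight is $\w(t)=\hat{\w}\log t+\tilde{\w}$ and its direction is frozen at $\hat{\w}/\|\hat{\w}\|$. The decision boundary is therefore parallel to the max-margin separator. By Assumption~\ref{ass:sepmargin}, $\|\hat{\w}\|=\|\w_{SVM}\|=2/\gamma$. The only quantity that affects $Q(t)$ is then the offset
\begin{align*}
d(t)=\frac{b(t)}{\|\w(t)\|}.
\end{align*}
I will write $B$ for the bound on the residual, $\|\tilde{\w}\|\le B$ (equivalently the bound on $\rho(t)$ from Theorem~\ref{soundry1}). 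The triangle inequality then gives
\begin{align*}
\tfrac{2}{\gamma}\log t - B \;\le\; \|\w(t)\| \;\le\; \tfrac{2}{\gamma}\log t + B.
\end{align*}

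\textbf{Step 1: accuracy threshold in terms of the offset.} For the concentrated test distribution, whose density parameter I take to be $\alpha$, I will compute $Q$ as an explicit function of the offset $d$. Test points lie in a band of width governed by the margin. A shift of the boundary by $d$ misclassifies the mass of the band lying between $0$ and $d$ on the side of the class toward which the boundary moves, with the $\alpha$-dependent weighting. Setting $Q(d)\ge(1-\epsilon)\sup Q=(1-\epsilon)Q(0)$ and solving should give a critical offset of the form $|d|\le d^*(\epsilon)$. It should satisfy $\frac{\gamma}{2d^*}\cdot(\text{normalization})=\frac{1}{1+2\epsilon(2\alpha-1)}$. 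The smallness $\epsilon<\epsilon_0$ is used here to keep $d^*$ inside the band, where $Q$ is linear or monotone in $|d|$, so that the inverse in Definition~\ref{def:grok} is well defined.

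\textbf{Step 2: the bias at late times.} I will use Theorem~\ref{thm_main_1}. Since $g(t)\to\infty$ like $\log(t/t_0)$, the closed form \eqref{biasevol} shows that $b(t)=-\log\delta+O(t^{-2\sqrt{A_S^+A_S^-}})$. So on the time scale relevant for $T_{te}$ (Phase~3), I can replace $|b(t)|$ by $|\log\delta|$, absorbing the exponentially small correction into the choice of $\epsilon_0$. The condition $|d(t)|\le d^*$ then becomes
\begin{align*}
\|\w(t)\|\ge |\log\delta|/d^*.
\end{align*}

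\textbf{Step 3: inverting.} Substituting the two-sided bound on $\|\w(t)\|$ yields $\log t \ge \frac{\gamma}{2}\bigl(|\log\delta|/d^* \mp B\bigr)$. After inserting the expression for $d^*$ from Step~1, the first time this holds is sandwiched as
\begin{align*}
-\tfrac{\gamma B}{2}+\tfrac{|\log\delta|}{1+2\epsilon(2\alpha-1)}\;\le\;\log T_{te}(\epsilon)\;\le\;\tfrac{\gamma B}{2}+\tfrac{|\log\delta|}{1+2\epsilon(2\alpha-1)}.
\end{align*}
Exponentiating gives both bounds. Letting $\gamma B\to0$ gives the stated asymptotic form.

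\textbf{Main obstacle.} The main obstacle is Step~1: deriving the precise map from offset to accuracy for the concentrated distribution, and checking that the supremum of $Q$ is attained at $d=0$. I would first try a uniform-in-band model with class weights $\alpha$ and $1-\alpha$, which should produce the factor $2\alpha-1$. A secondary point is justifying that $T_{te}$ falls in Phase~3 rather than in Phases~1 or~2. This requires that at the end of Phase~2 the offset exceeds $d^*$, which holds for small $\epsilon$ whenever $\delta\ne1$.
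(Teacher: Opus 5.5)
Your skeleton matches the paper's proof: the offset $d=b/\|\w\|$, $\|\hat{\w}\|=2/\gamma$, the sandwich $\frac{2}{\gamma}\log t\pm B$ on $\|\w(t)\|$ with $B$ bounding $\|\tilde{\w}\|$, replacing $|b|$ by $|\log\delta|$, then inverting. The gap is Step~1, the only step that produces the specific factor $1+2\epsilon(2\alpha-1)$. You do not carry it out, and the model you propose for it would not yield the stated bound. In the paper, $\alpha$ is not a class weight. It is the sensitivity parameter of $\mathcal{P}_{conc}$ ($\alpha=10>1$). The concentrated test set is class-balanced: class $+1$ lies in $\gamma/2\le \w_{\text{true}}^\top\x\le\alpha\gamma$ and class $-1$ in the mirror band. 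For $\alpha>1$, weights $\alpha$ and $1-\alpha$ are not even probabilities. Moreover, any class-asymmetric model makes $Q$ depend on the sign of $d$, whereas the target depends on $\delta$ only through $|\log\delta|$. Because you never derive $Q(d)$, your Step~1 simply reverse-engineers the target rather than proving it.

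The missing idea is the band geometry. Treat the projected test mass as uniform on each band, as the paper does. Since the bands start at $\gamma/2$, nothing is misclassified while $|d|\le\gamma/2$, so $\sup Q=1$; this settles your supremum question, since $d(t)\to0$. For $\gamma/2\le|d|\le\alpha\gamma$, the boundary misclassifies the fraction $(|d|-\gamma/2)/(\alpha\gamma-\gamma/2)$ of one class, so
\[
Q=1-\frac{|d|-\gamma/2}{2(\alpha\gamma-\gamma/2)}.
\]
Setting $Q=1-\epsilon$ gives $d^*=\frac{\gamma}{2}+\epsilon(2\alpha-1)\gamma=\frac{\gamma}{2}\bigl(1+2\epsilon(2\alpha-1)\bigr)$. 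So your ``normalization'' is $1$. The leading $1$ comes from the inner edge $\gamma/2$, and $2\alpha-1$ comes from the band width $\alpha\gamma-\gamma/2=\frac{\gamma}{2}(2\alpha-1)$. With this formula inserted, your Steps~2--3 reproduce the paper's bounds exactly. Your extra concern that $T_{te}$ must fall in Phase~3 rather than earlier is a legitimate caveat; the paper's proof does not address it either.
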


A detailed proof can be found in the appendix \ref{rig_main_thm_3}. For very small $\gamma$, we see that the bounds on the grokking time sandwich it, enabling us to estimate it. We observe that it depends on $\log \delta$. 

\begin{remark}\label{tgrokapprox}
    If $R/\gamma$ is large, $ T_{te}(\epsilon) >>  T_{tr}(\epsilon)$. This implies that $ T_{gr}(\epsilon) \approx  T_{te}(\epsilon)$.
\end{remark}
To demonstrate the validity of the above theorem we use the dataset with balanced classes and different values of $\log \delta$. We control this by planting a different ratio of support vectors in each class which is a measure of $\delta$. The plots are shown in Figure \eqref{fig:4}. The implementation details can be found in Appendix.

\begin{itemize}
    \item We observe that a higher imbalance in the support vectors (which corresponds to a higher $\delta$) causes the bias to stagnate at a higher value as theoretically predicted.
    \item This causes the hyperplane to move further from the origin before learning the generalizing solution. This increases grokking time as theoretically predicted.
\end{itemize}

Therefore we theoretically show that grokking time depends on the support vector imbalance (related to $\delta$) and the sensitivity parameter ($\alpha$).

\section{Dataset Choice and Experimental Results}\label{sec5}
The passage of the learning process through the two phases and regimes takes time. This is why generalization is delayed in grokking. Now we are interested in finding the dependency of grokking time on the structure of the data. To this end we consider specific distributions $\mathcal{P}$ and $\mathcal{P}_{conc}$, empirically observe the accuracy curves and theoretically derive an expression for grokking time.

\begin{figure}[h]
    \centering
    \includegraphics[width=0.47\linewidth]{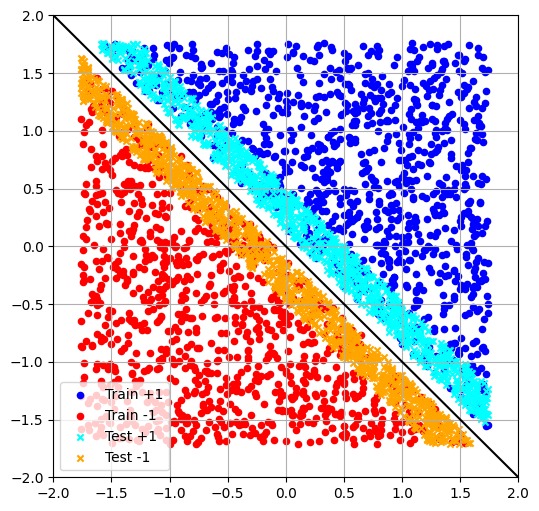}
    \caption{\textbf{Dataset}. Chosen training and test distribution sampled $\mathcal{P}$ and $\mathcal{P}_{conc}$ in $d=2$ dimension (exaggerated for visibility). Since $\mathcal{P}_{conc}$ is concentrated around the seperator, it is sensitive to poorly generalizing solutions.}
    \label{fig:data}
\end{figure}

\subsection{Dataset distribution details} 


\begin{itemize}
    \item \textbf{Test distribution $\mathcal{P}$}[Standard Test]: 
    Samples $\x \sim \mathcal{P}$ are drawn uniformly from the $d$-dimensional hypercube $[-5,5]^d$. 
    The separating hyperplane is defined by $\w_{\text{true}}^\top \x = 0$
    where
    \[
        \w_{\text{true}} = \frac{1}{\sqrt{n}}[1,\ldots,1]^\top \in \mathbb{R}^d.
    \]
    The dataset satisfies a margin condition with margin $\gamma = 10^{-3}$. 
    Labels are assigned as
    \[
        y =
        \begin{cases}
            +1, & \text{if } \w_{\text{true}}^\top \x \ge \gamma/2, \\
            -1, & \text{if } \w_{\text{true}}^\top \x \le -\gamma/2.
        \end{cases}
    \]

    \item \textbf{Sensitive distribution $\mathcal{P}_{conc}$} [Concentrated Test]:
    The distribution $\mathcal{P}_{conc}$ concentrates mass near the decision boundary. 
    Samples satisfy
    \[
        \gamma/2 \le \w_{\text{true}}^\top \x \le \alpha\gamma \quad \text{(class } +1),
    \]
    \[
        -\alpha\gamma \le \w_{\text{true}}^\top \x \le -\gamma/2 \quad \text{(class } -1),
    \]
    where $\alpha$ controls the severity of the distribution shift and is set to $\alpha = 10$. 
    We refer to $\alpha$ as the \emph{sensitivity parameter}.

    \item \textbf{Adversarial distribution $\mathcal{P}_{\text{PGD}}$}: 
    Starting from samples drawn from the base distribution $\mathcal{P}$, we generate adversarial examples using a projected gradient descent (PGD) attack. 
    Each sample is perturbed within a bounded $\ell_p$-ball around the original input while preserving its label, resulting in a distribution $\mathcal{P}_{\text{PGD}}$ that captures adversarially shifted data.
\end{itemize}
We conduct studies on these distributions as the sources for our train and concentrated test sets for $d=2$ (small dimension, easy to visualize hyperplane movement) and $d=1024$ (large dimension, asserts usage of same concepts in high dimensional models).~\Cref{fig:fig1} presents the simulation results for the above setup.


\begin{figure*}[t]
    \centering
    \includegraphics[width=0.78\linewidth]{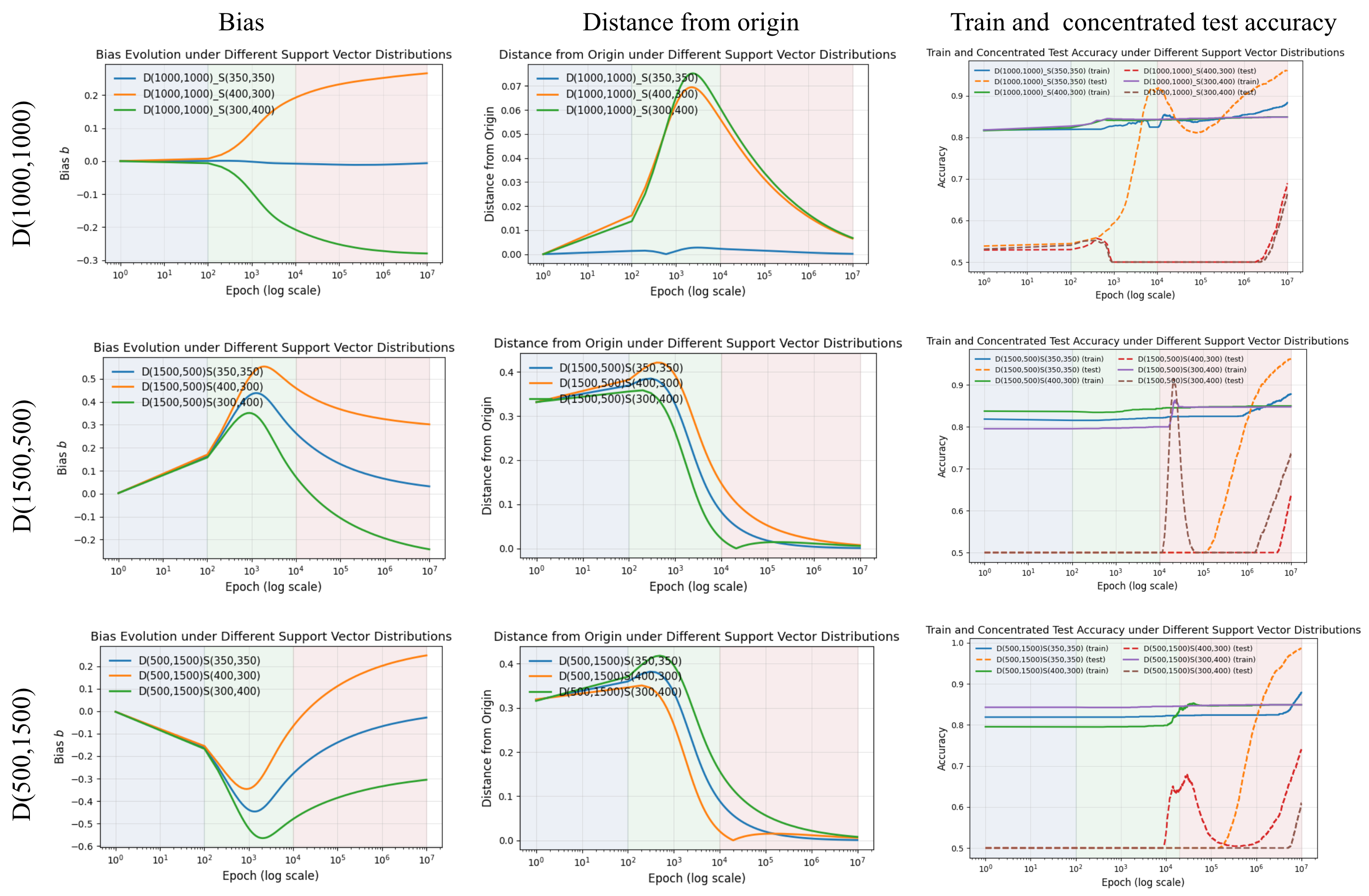}
    \caption{Evolution of (a) the bias $b(t)$ (left), (b) the distance of the separating hyperplane from the origin $|b(t)|/\|\w(t)\|$ (middle), and (c) training and concentrated test accuracies (right), as functions of training epochs, for varying degrees of positive and negative class imbalance,where D(1000,1000) mean 1000 positive class points and 1000 negative class points. Similarly S(350,350) means 350 support vectors for positive class and 350 support vectors for negative class.The three shaded regions refer to phase 1, phase 2, phase 3 respectively in all the plots.}
    \label{fig:3}
\end{figure*}




\subsection{Analysis of results}

\begin{table}[!htbp]
\centering
\begin{tabular}{lccccc}
\toprule
$d$ 
& Test 
& $\epsilon_0$ 
& $T_{tr}$ 
& $T_{te}$ 
& $\zeta$ \\
\midrule
$2$ 
& Std.
& $0$ 
& $5 \times 10^5$ 
& $2 \times 10^6$ 
& $4$ \\

$2$ 
& Conc. 
& $0$ 
& $4 \times 10^2$ 
& $7 \times 10^6$ 
& $1.75 \times 10^4$ \\

$2$ 
& Conc.
& $0.001$ 
& $\le 10$ 
& $\sim 10^6$ 
& $\ge 1.75 \times 10^4$ \\

\midrule
$1024$ 
& Std.
& $0$ 
& $10^6$ 
& $10^6$ 
& $1$ \\

$1024$ 
& Conc.
& $0$ 
& $10^2$ 
& $5 \times 10^6$ 
& $5 \times 10^4$ \\

$1024$ 
& Conc.
& $0.001$ 
& $10$ 
& $4 \times 10^6$ 
& $\ge 5 \times 10^4$ \\
\bottomrule
\end{tabular}
\caption{Training and test convergence times under standard and concentrated test distributions. No grokking is observed for the vanilla test set, where training and test accuracies track closely. In contrast, the concentrated (sensitive) test distribution exhibits persistent grokking with large delay $\zeta$, independent of dimension.}
\label{tab:grokking}
\end{table}

For the standard test distribution (right panel), we do not observe grokking. In contrast, for the concentrated test distribution (left panel), we observe clear grokking. This is true for both $n=2$ and $n=1024$. See Table \eqref{tab:grokking} for the results.


The learning also follows the three phases as predicted: Phase $1$ ($10^2$--$5\times10^3$ epochs), Phase $2$ ($5\times10^3$--$5\times10^4$ epochs) and Phase $3$ ($5\times10^4$--$10^7$ epochs).

This non-monotonic test behavior is reminiscent of double descent~\cite{davies}, and our theory explains its origin for linear classifiers.


However, the three learning phases are less distinct in high dimensions. This is because $A_S^+$ and $A_S^-$, which depend on inner products involving $\x_i$, become smaller, reducing the magnitude of $g(t)$ and consequently the variation in $b(t)$.

\paragraph{Experimental validation of Theorem~\eqref{thm_main_1}.}
Using the same setup, we simulate cases with population class imbalance and support vector imbalance. Figure~\eqref{fig:3} reports $b(t)$, $|b(t)|/\|\w(t)\|$, and train and concentrated test accuracies. Implementation details are provided in Appendix \eqref{app:experimental_setup}.

\begin{itemize}
    \item \textbf{Balanced classes}: The bias remains near zero in early dynamics (up to $10^2$ epochs) and drifts in late dynamics when support vectors are imbalanced. The distance from the origin first increases (unlearning, up to $2\times10^3$ epochs) and then decreases (generalization), after which test accuracy improves.
    \item \textbf{Imbalance toward $+1$}: The bias drifts upward in early dynamics and then diverges in late dynamics under support vector imbalance. The distance from the origin peaks around $7\times10^2$ epochs before decreasing, again leading to generalization.
    \item \textbf{Imbalance toward $-1$}: This case is symmetric to the previous one.
\end{itemize}
\section{Grokking with PGD attacks}\label{sec6}

We generate adversarial test samples after each training update by using PGD attacks on the test data \cite{madry}. We monitor this accuracy and empirically observe grokking in this setting as well.

Figure \eqref{fig:fig1} shows that the model groks adversarial test data with and without a learnable bias in both low and high dimensional data. Implementation details are provided in Appendix \eqref{app:adversarial_setup}. We observe empirically that the beginning of the generalization phase in the model with a learnable bias is coincident with the grokking of the adversarial test samples. This shows that `delayed robustness', a type of grokking in deep networks first introduced by \cite{humayun} is also prevalent in the learning of a single hyperplane.



\section{Conclusion}\label{sec7}
In this paper, we explored grokking in logistic regression trained with gradient descent on linearly separable data. Our main theoretical result (Theorem \eqref{thm_main_1}) shows that the model bias proceeds through three distinct phases: an early population-dominated phase, a support-vector-dominated unlearning phase, and a late-time generalization phase.

For standard test distributions, no grokking is observed (Figure \eqref{fig:fig1} and Table \eqref{tab:grokking}). In contrast, for concentrated or margin-sensitive test distributions, test performance depends critically on the bias dynamics. In this regime, the late-time dynamics are entirely controlled by the support vectors, and we make this dependence explicit by relating the grokking delay to support-vector (Figure \eqref{fig:3}). The resulting delay between training and test convergence is quantified explicitly by the grokking time bound in Theorem \eqref{thm_main_3} (Figure \eqref{fig:4}). 

Taken together, our results demonstrate that grokking can arise in the simplest linear models, without representation learning or overparameterization. The phenomenon is fully explained by implicit bias and slow optimization dynamics, and persists under adversarial and concentrated testing.

\section*{Impact Statement}

This paper presents work whose goal is to advance the field of Machine
Learning. There are many potential societal consequences of our work, none which we feel must be specifically highlighted here.


\bibliography{reference}

\begin{thebibliography}{29}
\providecommand{\natexlab}[1]{#1}
\providecommand{\url}[1]{\texttt{#1}}
\expandafter\ifx\csname urlstyle\endcsname\relax
  \providecommand{\doi}[1]{doi: #1}\else
  \providecommand{\doi}{doi: \begingroup \urlstyle{rm}\Url}\fi

\bibitem[Beck et~al.(2025)Beck, Levi, and Bar-Sinai]{beck}
Beck, A., Levi, N., and Bar-Sinai, Y.
\newblock Grokking at the edge of linear separability, 2025.
\newblock URL \url{https://arxiv.org/abs/2410.04489}.

\bibitem[Davies et~al.(2023)Davies, Langosco, and Krueger]{davies}
Davies, X., Langosco, L., and Krueger, D.
\newblock Unifying grokking and double descent, 2023.
\newblock URL \url{https://arxiv.org/abs/2303.06173}.

\bibitem[DeMoss et~al.(2025)DeMoss, Sapora, Foerster, Hawes, and Posner]{DeMoss}
DeMoss, B., Sapora, S., Foerster, J., Hawes, N., and Posner, I.
\newblock The complexity dynamics of grokking.
\newblock \emph{Physica D: Nonlinear Phenomena}, 482:\penalty0 134859, November 2025.
\newblock ISSN 0167-2789.
\newblock \doi{10.1016/j.physd.2025.134859}.
\newblock URL \url{http://dx.doi.org/10.1016/j.physd.2025.134859}.

\bibitem[Doshi et~al.(2024)Doshi, Das, He, and Gromov]{doshi}
Doshi, D., Das, A., He, T., and Gromov, A.
\newblock To grok or not to grok: Disentangling generalization and memorization on corrupted algorithmic datasets, 2024.
\newblock URL \url{https://arxiv.org/abs/2310.13061}.

\bibitem[Fan et~al.(2024)Fan, Pascanu, and Jaggi]{fan}
Fan, S., Pascanu, R., and Jaggi, M.
\newblock Deep grokking: Would deep neural networks generalize better?, 2024.
\newblock URL \url{https://arxiv.org/abs/2405.19454}.

\bibitem[Gromov(2023)]{gromov}
Gromov, A.
\newblock Grokking modular arithmetic, 2023.
\newblock URL \url{https://arxiv.org/abs/2301.02679}.

\bibitem[Humayun et~al.(2024)Humayun, Balestriero, and Baraniuk]{humayun}
Humayun, A.~I., Balestriero, R., and Baraniuk, R.
\newblock Deep networks always grok and here is why, 2024.
\newblock URL \url{https://arxiv.org/abs/2402.15555}.

\bibitem[Ji \& Telgarsky(2019)Ji and Telgarsky]{jitelgarsky}
Ji, Z. and Telgarsky, M.
\newblock The implicit bias of gradient descent on nonseparable data.
\newblock In Beygelzimer, A. and Hsu, D. (eds.), \emph{Proceedings of the Thirty-Second Conference on Learning Theory}, volume~99 of \emph{Proceedings of Machine Learning Research}, pp.\  1772--1798. PMLR, 25--28 Jun 2019.
\newblock URL \url{https://proceedings.mlr.press/v99/ji19a.html}.

\bibitem[Junior et~al.(2025)Junior, Dumas, and Rabusseau]{Junior}
Junior, T. N.~P., Dumas, G., and Rabusseau, G.
\newblock Grokking beyond the {E}uclidean norm of model parameters.
\newblock In Singh, A., Fazel, M., Hsu, D., Lacoste-Julien, S., Berkenkamp, F., Maharaj, T., Wagstaff, K., and Zhu, J. (eds.), \emph{Proceedings of the 42nd International Conference on Machine Learning}, volume 267 of \emph{Proceedings of Machine Learning Research}, pp.\  28552--28618. PMLR, 13--19 Jul 2025.
\newblock URL \url{https://proceedings.mlr.press/v267/junior25a.html}.

\bibitem[Kaplan et~al.(2020)Kaplan, McCandlish, Henighan, Brown, Chess, Child, Gray, Radford, Wu, and Amodei]{kaplan}
Kaplan, J., McCandlish, S., Henighan, T., Brown, T.~B., Chess, B., Child, R., Gray, S., Radford, A., Wu, J., and Amodei, D.
\newblock Scaling laws for neural language models, 2020.
\newblock URL \url{https://arxiv.org/abs/2001.08361}.

\bibitem[Kumar et~al.(2024)Kumar, Bordelon, Gershman, and Pehlevan]{kumar}
Kumar, T., Bordelon, B., Gershman, S.~J., and Pehlevan, C.
\newblock Grokking as the transition from lazy to rich training dynamics, 2024.
\newblock URL \url{https://arxiv.org/abs/2310.06110}.

\bibitem[Levi et~al.(2023)Levi, Beck, and Bar-Sinai]{levi}
Levi, N., Beck, A., and Bar-Sinai, Y.
\newblock Grokking in linear estimators -- a solvable model that groks without understanding, 2023.
\newblock URL \url{https://arxiv.org/abs/2310.16441}.

\bibitem[Liu et~al.(2022)Liu, Kitouni, Nolte, Michaud, Tegmark, and Williams]{liu2022}
Liu, Z., Kitouni, O., Nolte, N., Michaud, E.~J., Tegmark, M., and Williams, M.
\newblock Towards understanding grokking: An effective theory of representation learning, 2022.
\newblock URL \url{https://arxiv.org/abs/2205.10343}.

\bibitem[Liu et~al.(2023)Liu, Michaud, and Tegmark]{liu2023}
Liu, Z., Michaud, E.~J., and Tegmark, M.
\newblock Omnigrok: Grokking beyond algorithmic data, 2023.
\newblock URL \url{https://arxiv.org/abs/2210.01117}.

\bibitem[Lyu et~al.(2024)Lyu, Jin, Li, Du, Lee, and Hu]{lyu}
Lyu, K., Jin, J., Li, Z., Du, S.~S., Lee, J.~D., and Hu, W.
\newblock Dichotomy of early and late phase implicit biases can provably induce grokking, 2024.
\newblock URL \url{https://arxiv.org/abs/2311.18817}.

\bibitem[Ma et~al.(2018)Ma, Bassily, and Belkin]{maa}
Ma, S., Bassily, R., and Belkin, M.
\newblock The power of interpolation: Understanding the effectiveness of sgd in modern over-parametrized learning, 2018.
\newblock URL \url{https://arxiv.org/abs/1712.06559}.

\bibitem[Madry et~al.(2019)Madry, Makelov, Schmidt, Tsipras, and Vladu]{madry}
Madry, A., Makelov, A., Schmidt, L., Tsipras, D., and Vladu, A.
\newblock Towards deep learning models resistant to adversarial attacks, 2019.
\newblock URL \url{https://arxiv.org/abs/1706.06083}.

\bibitem[Manning-Coe et~al.(2025)Manning-Coe, Gliozzi, Stapleton, Hirst, Tomasi, Bradlyn, and Berman]{manningcoe}
Manning-Coe, D., Gliozzi, J., Stapleton, A.~G., Hirst, E., Tomasi, G.~D., Bradlyn, B., and Berman, D.~S.
\newblock Grokking vs. learning: Same features, different encodings, 2025.
\newblock URL \url{https://arxiv.org/abs/2502.01739}.

\bibitem[Mohamadi et~al.(2024)Mohamadi, Li, Wu, and Sutherland]{mohamadi}
Mohamadi, M.~A., Li, Z., Wu, L., and Sutherland, D.~J.
\newblock Why do you grok? a theoretical analysis of grokking modular addition, 2024.
\newblock URL \url{https://arxiv.org/abs/2407.12332}.

\bibitem[Nakkiran et~al.(2019)Nakkiran, Kaplun, Bansal, Yang, Barak, and Sutskever]{nakkiran}
Nakkiran, P., Kaplun, G., Bansal, Y., Yang, T., Barak, B., and Sutskever, I.
\newblock Deep double descent: Where bigger models and more data hurt, 2019.
\newblock URL \url{https://arxiv.org/abs/1912.02292}.

\bibitem[Nanda et~al.(2023)Nanda, Chan, Lieberum, Smith, and Steinhardt]{nanda}
Nanda, N., Chan, L., Lieberum, T., Smith, J., and Steinhardt, J.
\newblock Progress measures for grokking via mechanistic interpretability, 2023.
\newblock URL \url{https://arxiv.org/abs/2301.05217}.

\bibitem[Neyshabur et~al.(2017)Neyshabur, Bhojanapalli, McAllester, and Srebro]{neyshabur}
Neyshabur, B., Bhojanapalli, S., McAllester, D., and Srebro, N.
\newblock Exploring generalization in deep learning, 2017.
\newblock URL \url{https://arxiv.org/abs/1706.08947}.

\bibitem[Papyan et~al.(2020)Papyan, Han, and Donoho]{Papyan_2020}
Papyan, V., Han, X.~Y., and Donoho, D.~L.
\newblock Prevalence of neural collapse during the terminal phase of deep learning training.
\newblock \emph{Proceedings of the National Academy of Sciences}, 117\penalty0 (40):\penalty0 24652–24663, September 2020.
\newblock ISSN 1091-6490.
\newblock \doi{10.1073/pnas.2015509117}.
\newblock URL \url{http://dx.doi.org/10.1073/pnas.2015509117}.

\bibitem[Power et~al.(2022)Power, Burda, Edwards, Babuschkin, and Misra]{power}
Power, A., Burda, Y., Edwards, H., Babuschkin, I., and Misra, V.
\newblock Grokking: Generalization beyond overfitting on small algorithmic datasets.
\newblock \emph{CoRR}, abs/2201.02177, 2022.
\newblock URL \url{https://arxiv.org/abs/2201.02177}.

\bibitem[Soudry et~al.(2024)Soudry, Hoffer, Nacson, Gunasekar, and Srebro]{soudry}
Soudry, D., Hoffer, E., Nacson, M.~S., Gunasekar, S., and Srebro, N.
\newblock The implicit bias of gradient descent on separable data, 2024.
\newblock URL \url{https://arxiv.org/abs/1710.10345}.

\bibitem[Tan \& Huang(2024)Tan and Huang]{tan}
Tan, Z. and Huang, W.
\newblock Understanding grokking through a robustness viewpoint, 2024.
\newblock URL \url{https://arxiv.org/abs/2311.06597}.

\bibitem[Thilak et~al.(2022)Thilak, Littwin, Zhai, Saremi, Paiss, and Susskind]{thilak}
Thilak, V., Littwin, E., Zhai, S., Saremi, O., Paiss, R., and Susskind, J.
\newblock The slingshot mechanism: An empirical study of adaptive optimizers and the grokking phenomenon, 2022.
\newblock URL \url{https://arxiv.org/abs/2206.04817}.

\bibitem[Zhang et~al.(2017)Zhang, Bengio, Hardt, Recht, and Vinyals]{zhang}
Zhang, C., Bengio, S., Hardt, M., Recht, B., and Vinyals, O.
\newblock Understanding deep learning requires rethinking generalization, 2017.
\newblock URL \url{https://arxiv.org/abs/1611.03530}.

\bibitem[Žunkovič \& Ilievski(2022)Žunkovič and Ilievski]{zunkovic}
Žunkovič, B. and Ilievski, E.
\newblock Grokking phase transitions in learning local rules with gradient descent, 2022.
\newblock URL \url{https://arxiv.org/abs/2210.15435}.

\end{thebibliography}
\bibliographystyle{icml2026}

\newpage
\appendix
\onecolumn

\section{Proofs of Theorems}
\begin{theorem}\label{rig_main_thm_1}
    Suppose $\mathcal{S}^+$ and $\mathcal{S}^-$ represent all the support vectors which belong to class $1$ and $-1$ respectively. Define $A_S^+ = \sum_{i\in S^+} e^{-\tilde{\w}^{\top}\x_i}$ and $A_S^- = \sum_{i\in S^-} e^{\tilde{\w}^{\top}\x_i}$ where $\tilde{\w}$ is defined above. Define $\delta = \sqrt{\frac{A_S^-}{A_S^+}}$ and $g(t)$ as
    \begin{align*}
       g(t)=
2\sqrt{A_S^+A_S^-}\log \left(\frac{t}{t_0}\right)
    \end{align*} where $t_0$ is a constant.
    Then the late time gradient flow dynamics of the bias follows
    \begin{align}\label{app_biasdyn}
        \dv{b}{t} = \frac{1}{t}\left( A_S^+ e^{-b} - A_S^- e^{b} \right).
    \end{align}
    Integrating, $b(t)$ evolves as
    \begin{align}\label{app_biasevol}
        b(t) = \log \left(
\frac{(1+\delta \exp(b_{0}))\exp(g(t))-(1-\delta \exp(b_{0}))}
{(1+\delta \exp(b_{0}))\exp(g(t))+(1-\delta \exp(b_{0}))}
\right) - \log \delta.
    \end{align}

    Consequently, $b(t)$ converges to $b_\infty = -\log \delta$.
\end{theorem}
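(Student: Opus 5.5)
The plan is to first derive the ordinary differential equation \eqref{app_biasdyn} from gradient flow on the exponential loss, and then integrate it in closed form with a change of variables that turns it into a Riccati equation with constant coefficients. For the derivation we start from the gradient-flow form of the bias update, \eqref{eq12}, and split the sum by class exactly as in \eqref{biasdiffeq}. This gives $\dv{b}{t} = A^+(t)e^{-b} - A^-(t)e^{b}$, with $A^\pm(t)$ the class-wise sums of $e^{\mp \w(t)^\top \x_i}$.

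Next we invoke Assumption~\ref{latetimedyn}. By part 2, only support vectors survive in $A^\pm(t)$ for $t\ge t_0$. By part 1 and Theorem~\ref{soundry1}, $\w(t) = \hat{\w}\log t + \tilde{\w}$ for $t \ge t_0$. The key fact is that every support vector is active in the SVM problem \eqref{svm} with $b=0$, so $y_{s_i}\hat{\w}^\top \x_{s_i} = 1$; here we use the theorem above identifying $\hat{\w}$ with $\w_{SVM}$. Hence for $\x_i\in\mathcal{S}^+$ we get $e^{-\w(t)^\top\x_i} = e^{-\log t}\,e^{-\tilde{\w}^\top\x_i}$. Likewise, for $\x_i\in\mathcal{S}^-$ we have $\hat{\w}^\top \x_i=-1$, so $e^{\w(t)^\top\x_i} = t^{-1}e^{\tilde{\w}^\top\x_i}$. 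Summing gives $A^\pm(t) = A_S^\pm/t$, which is \eqref{app_biasdyn}. Any learning-rate factor $\eta$ is absorbed into the time scale, as in \citet{soudry}.

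For the integration, set $\tau = \log(t/t_0)$, so that $\dv{b}{\tau} = A_S^+e^{-b} - A_S^-e^{b}$ and $b(\tau=0)=b_0$. Then substitute $u = \delta e^{b}$. Using $\delta^2 = A_S^-/A_S^+$, one checks that $u A_S^+ e^{-b} = \delta A_S^+ = \sqrt{A_S^+A_S^-}$ and $u A_S^- e^{b} = (A_S^-/\delta)u^2 = \sqrt{A_S^+A_S^-}\,u^2$. The equation therefore becomes $\dv{u}{\tau} = c(1-u^2)$ with $c=\sqrt{A_S^+A_S^-}$. Separating variables with partial fractions gives
\begin{align*}
\frac{1+u}{1-u} = \frac{1+u_0}{1-u_0}\, e^{2c\tau} = \frac{1+u_0}{1-u_0}\, e^{g(t)},
\end{align*}
where $u_0=\delta e^{b_0}$ and we have used $2c\tau = g(t)$. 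Solving for $u$ and clearing the factor $(1-u_0)$ gives $u = \frac{(1+u_0)e^{g}-(1-u_0)}{(1+u_0)e^{g}+(1-u_0)}$. Then $b = \log u - \log\delta$, which is \eqref{app_biasevol}.

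For the limit, since $c>0$ we have $g(t)\to\infty$, so the fraction tends to $1$ and $b(t)\to -\log\delta$.

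The main obstacle is bookkeeping and well-posedness rather than analysis. The separation step divides by $1-u$, so the equilibrium case $u_0=1$ must be treated separately; there the formula still gives $b\equiv-\log\delta$. The case $u_0>1$ must also be checked. There $1-u_0<0$, the argument of the logarithm stays positive because $(1+u_0)e^{g}>(1+u_0)>|1-u_0|$, and the closed form remains valid with the same algebra. The other delicate point is the claim $y_{s_i}\hat{\w}^\top\x_{s_i}=1$ and the sign conventions in $A_S^-$. These must be verified carefully, since they are exactly what produces the clean $1/t$ factor common to both classes.
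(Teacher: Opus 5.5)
Your proposal is correct and follows essentially the same route as the paper: split the bias gradient by class, substitute $\w(t)=\hat{\w}\log t+\tilde{\w}$ on the support vectors to get the $1/t$ factor, then integrate by separation of variables and partial fractions. The only difference is cosmetic: the paper substitutes $u=e^{b}$ directly in $t$, while your $\tau=\log(t/t_0)$ and $u=\delta e^{b}$ normalize the same integral to $\dv{u}{\tau}=c(1-u^2)$. Your handling of the sign $\hat{\w}^\top\x_i=-1$ on $\mathcal{S}^-$, the equilibrium case $u_0=1$, and the case $u_0>1$ is more careful than the paper's, which writes $\hat{\w}^\top\x_i=1$ for both classes and drops the absolute values without comment.
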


\begin{proof}



From \eqref{soudrymain} we have
\begin{align*}
    \w(t)=\hat{\w}\log t+\rho(t)
\end{align*}
and from \eqref{latetimedyn}, $\exists t_0$ such that $\rho(t)\ = \tilde{\w}$ for all $t \geq t_0$. In the rest of the proof we write $\w(t)$ as $\w$ and $b(t)$ as $b$ to reduce clutter. When the time dependence is important to stress, we revert to showing the explicit time dependence.





Gradient flow on the bias term gives us
\begin{align*}
        \Rightarrow \dv{b}{t}=-\nabla_b \sum_i e^{-(\w^\top \x_i+b)y_i}
\end{align*}
\begin{align*}
   \Rightarrow \dv{b}{t}=-\sum_i e^{-(\w^\top \x_i+b)y_i}(-y_i) = \sum_i y_i e^{-y_i(\w^\top \x_i+b)} = \sum_{y_i =+1} e^{-(\w^\top \x_i+b)}
-\sum_{y_i=-1} e^{(\w^\top \x_i+b)}.
\end{align*}



Substituting for $\w$ from \eqref{soudrymain}, we have
\begin{align*}
    \dv{b}{t}
=
\sum_{i\in I^+}
e^{-\hat{\w}^{\top}\x_i\log t}\,e^{-\rho(t)^{\top}\x_i}\,e^{-b}
\;-\;
\sum_{i\in I^-}
e^{\hat{\w}^{\top}\x_i\log t}\,e^{\rho(t)^{\top}\x_i}\,e^{b}.
\end{align*}

We now analyze the differential equation in the late time dynamics. In this regime, by assumption \eqref{latetimedyn}, only the support vectors contribute for $t \geq t_0$.
\begin{align*}
    \dv{b}{t}
=
\sum_{i\in S^+}
e^{-\log t}\,e^{-\rho(t)^{\top}\x_i}\,e^{-b}
\;-\;
\sum_{i\in S^-}
e^{-\log t}\,e^{\rho(t)^{\top}\x_i}\,e^{b},
\qquad \text{since } \hat{\w}^{\top}\x_i = 1.
\end{align*}

We also have $\rho(t)= \tilde{\w}$. Therefore
\begin{align*}
    \dv{b}{t}
=
\sum_{i\in S^+}
t^{-1}\,e^{-\tilde{\w}^{\top}\x_i}\,e^{-b}
\;-\;
\sum_{i\in S^-}
t^{-1}\,e^{\tilde{\w}^{\top}\x_i}\,e^{b} =
t^{-1}
\left(
\sum_{i\in S^+} e^{-\tilde{\w}^{\top}\x_i}\,e^{-b}
-
\sum_{i\in S^-} e^{\tilde{\w}^{\top}\x_i}\,e^{b}
\right). 
\end{align*}

Define $ A_S^+ = \sum_{i\in S^+} e^{-\tilde{\w}^{\top}\x_i}$ and $A_S^- := \sum_{i\in S^-} e^{\tilde{\w}^{\top}\x_i}$.

Then
\begin{align*}
    \dv{b}{t}
=
t^{-1}\left(A_S^+e^{-b}-A_S^-e^{b}\right)\\
\implies \frac{\dd{b}}{A_S^+e^{-b}-A_S^-e^{b}} = t^{-1}\,\dd{t}
\end{align*}


We integrate from the beginning of the idealized late time dynamics $t_0$ where the bias is $b_0 = b(t_0)$.
\begin{align*}
   \int_{b_{0}}^{b}
\frac{\dd{b}}{A_S^+e^{-b}-A_S^-e^{b}}
=
\int_{t_0}^{t} t^{-1}\,\dd{t}. 
\end{align*}
Now let $e^{b}=u$, so that $e^{b}\,\dd{b}=\dd{u}$.
\begin{align*}
    &\implies
\int_{e^{b_{0}}}^{e^{b}}
\frac{\dd{u}}{A_S^+ - A_S^- u^2}
=
\left.\log t\right|_{t_0}^{t}\\
    &\implies
\int_{e^{b_{0}}}^{e^{b}}
\frac{\dd{u}}{(\sqrt{A_S^+})^2-(\sqrt{A_S^-}\,u)^2}
=
\left.\log t\right|_{t_0}^{t}\\
&\implies \frac{1}{2\sqrt{A_S^+}\sqrt{A_S^-}}
\ln\left|
\frac{\sqrt{A_S^+}+\sqrt{A_S^-}\,u}{\sqrt{A_S^+}-\sqrt{A_S^-}\,u}
\right|\Bigg|_{e^{b_{0}}}^{e^{b}}
=
\log \frac{t}{t_0}\\
&\implies \log\Biggl|
\frac{\sqrt{A_S^+}+\sqrt{A_S^-}\,e^{b}}{\sqrt{A_S^+}-\sqrt{A_S^-}\,e^{b}}
\Biggr|
+
\log\Biggl|
\frac{\sqrt{A_S^+}-\sqrt{A_S^-}e^{b_{0}}}{\sqrt{A_S^+}+\sqrt{A_S^-}e^{b_{0}}}
\Biggr| = 2\sqrt{A_S^+A_S^-} \left(\log \frac{t}{t_0}\right)\\
&\implies \log \left|
\frac{1+\sqrt{\frac{A_S^-}{A_S^+}}\,e^{b}}{1-\sqrt{\frac{A_S^-}{A_S^+}}\,e^{b}}
\cdot
\frac{1-\sqrt{\frac{A_S^-}{A_S^+}}\,e^{b_{0}}}{1+\sqrt{\frac{A_S^-}{A_S^+}}\,e^{b_{0}}}
\right| = 2\sqrt{A_S^+A_S^-} \left(\log \frac{t}{t_0}\right).
\end{align*}
Let $g(t) =
2\sqrt{A_S^+A_S^-}\left(\log\frac{t}{t_0}\right)$ and $\delta = \sqrt{\frac{A_S^-}{A_S^+}}$. Then,
\begin{align*}
    &\implies \log \left|
\frac{1+\delta e^{b}}{1-\delta e^{b}}
\cdot
\frac{1-\delta e^{b_{0}}}{1+\delta e^{b_{0}}}
\right|
= g(t)\\
&\implies
\frac{1+\delta e^{b}}{1-\delta e^{b}}
\cdot
\frac{1-\delta e^{b_{0}}}{1+\delta e^{b_{0}}}
= e^{g(t)}\\
&\implies 1+\delta e^{b} = \frac{1+\delta e^{b_{0}}}{1-\delta e^{b_{0}}}\,e^{g(t)}
-
\delta\,\frac{1+\delta e^{b_{0}}} {1-\delta e^{b_{0}}}\,e^{g(t)}\,e^{b}. 
\end{align*}

Therefore
\begin{align*}
&\delta e^{b}\Bigl(1-\delta e^{b_{0}}+ e^{g(t)}(1+\delta e^{b_{0}})\Bigr)
=
\Bigl((1+\delta e^{b_{0}})e^{g(t)}-(1-\delta e^{b_{0}})\Bigr)\\
&\implies b(t) = \log\!\left(
\frac{(1+\delta e^{b_{0}})\exp({g(t)})-(1-\delta e^{b_{0}})}
{(1+\delta e^{b_{0}})\exp({g(t)})+(1-\delta e^{b_{0}})}
\right) - \log \delta.
\end{align*}
\end{proof}

\begin{theorem}\label{rig_main_thm_3}
    There exists a small enough $\epsilon_0$ such that $\forall \epsilon < \epsilon_0$ we have the following bounds on $ T_{te}(\epsilon)$:
    \begin{align}
        T_{te}(\epsilon) \leq \exp \left( \frac{\gamma B}{2} + \frac{|\log \delta|}{1 + 2\epsilon(2\alpha-1)}\right)\\
        T_{te}(\epsilon) \geq \exp \left( -\frac{\gamma B}{2} + \frac{|\log \delta|}{1 + 2\epsilon(2\alpha-1)}\right)
    \end{align}
    Therefore for a small enough $\gamma$ we have
    \begin{align*}
         T_{te}(\epsilon) = \exp \left(\frac{|\log \delta|}{1 + 2\epsilon(2\alpha-1)}\right)
    \end{align*}
\end{theorem}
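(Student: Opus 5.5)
Under Assumption~\ref{latetimedyn}, the direction of $\w(t)$ is frozen at $\hat{\w}/\|\hat{\w}\|$, so test accuracy on $\mathcal{P}_{conc}$ is determined only by the signed offset of the hyperplane from the origin, $d(t) = -b(t)/\|\w(t)\|$. I will take $\|\w(t)\| \approx \|\hat{\w}\|\log t$, with the bounded residual $\tilde{\w}$ absorbed into constants. The plan has three steps: compute $Q$ as a function of $d$, invert the accuracy threshold to get a condition on $d(t)$, and then substitute the closed form of $b(t)$ from Theorem~\ref{rig_main_thm_1}.

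First, I compute $Q(t)$ for $\mathcal{P}_{conc}$, which I model as uniform in the direction $\w_{true}$ on $[\gamma/2,\alpha\gamma]$ for class $+1$ and symmetrically for class $-1$. Take $b_\infty = -\log\delta > 0$ without loss of generality; otherwise swap the roles of the classes, which produces the $|\log\delta|$ in the statement. A hyperplane shifted by $d \in [\gamma/2,\alpha\gamma]$ misclassifies the positive points whose projection lies in $[\gamma/2, d]$. Class $-1$ is entirely correct, so $\sup_t Q = 1$ and $Q = 1 - \tfrac12 (d-\gamma/2)/((\alpha-\tfrac12)\gamma)$. Setting $Q = 1-\epsilon$ gives the threshold
\[
d^* = \frac{\gamma}{2}\bigl(1 + 2\epsilon(2\alpha-1)\bigr).
\]
This is exactly where the factor $1+2\epsilon(2\alpha-1)$ comes from. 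Taking $\epsilon_0$ small keeps $d^*$ inside the support, and $\epsilon_0$ must also be small enough that the threshold is crossed in Phase~3 rather than during Phase~1.

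Second, I solve $|b(t)|/\|\w(t)\| = d^*$ for $t$. For $t \gg t_0$, the formula in Theorem~\ref{rig_main_thm_1} gives $b(t) = -\log\delta + \log\bigl(1 - \tfrac{2c}{c e^{g(t)} + \ldots}\bigr)$, so $b(t) = -\log\delta + O(e^{-g(t)})$. With $\|\w(t)\| = \log t/(\text{margin scale})$, where margin $\gamma$ corresponds to $\|\hat{\w}\| = 2/\gamma$ in the paper's convention, the crossing condition reads $\log T_{te} \approx |b(T_{te})|/d^*$ up to normalization. Combining these gives $\log T_{te} = |\log\delta|/(1+2\epsilon(2\alpha-1))$ plus an error term.

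Third, I bound the error. I define $B$ as a uniform bound on the correction terms: the bounded residual $\rho(t)$ (a shift by $\tilde{\w}^\top\x$ scaled by $\gamma/2$ against $\|\x\| < R$, using Assumption~\ref{ass:support}), together with the transient $|b(t) - b_\infty|$, which is bounded because $b$ is monotone between $b_0$ and $b_\infty$. The transient is at most $|b_0 - b_\infty|$, so it sits inside $B$. These corrections enter $\log T_{te}$ additively, with magnitude at most $\gamma B/2$. Exponentiating gives the two-sided bounds, and letting $\gamma \to 0$ collapses the sandwich to the stated approximate equality.

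The main obstacle is the normalization bookkeeping in the second and third steps. The relation between $\|\w\|$, the margin $\gamma$, and the threshold $d^*$ has to produce exactly the exponent $|\log\delta|/(1+2\epsilon(2\alpha-1))$ with no stray factors of $\gamma$. I would first fix $\|\hat{\w}\| = 2/\gamma$ and check that the $\gamma$'s cancel between $d^*$ and $\|\w\|$. I would then check that all the remaining approximations, namely $\rho$, the residual of $b$, and the cutoff $t_0$, are collectively controlled by a single constant $B$ multiplying $\gamma/2$.
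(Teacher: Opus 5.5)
Your first two steps coincide with the paper's proof. You use the same accuracy formula $Q = 1-(d-\gamma/2)/((2\alpha-1)\gamma)$, the same threshold $d^*=\frac{\gamma}{2}(1+2\epsilon(2\alpha-1))$, and the same normalization $\|\hat{\w}\|=2/\gamma$.

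The gap is in your third step. You put the bias transient $|b(t)-b_\infty|\le|b_0-b_\infty|$ into the same constant $B$ and assert that it enters $\log T_{te}$ with the prefactor $\gamma/2$. It does not. Write $k=1+2\epsilon(2\alpha-1)$, $|b(T)|=|\log\delta|+e_b$ and $\|\w(T)\|=\frac{2}{\gamma}\log T+e_w$. The crossing condition $|b(T)|=\frac{\gamma k}{2}\|\w(T)\|$ then gives
\[
\log T_{te}=\frac{|\log\delta|}{k}+\frac{e_b}{k}-\frac{\gamma}{2}\,e_w .
\]
Only the weight residual is damped by $\gamma/2$. The cancellation of $\gamma$ between $d^*$ and $\|\hat{\w}\|$, which you planned to verify, is exactly what strips the $\gamma$ from the bias residual. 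Your monotonicity bound $|e_b|\le|b_0-b_\infty|$ equals $|\log\delta|$ when $b_0=0$. So the error is as large as the main term and independent of $\gamma$. The stated bounds then hold only with a $B$ of order $|b_0-b_\infty|/\gamma$, and letting $\gamma\to 0$ no longer collapses the sandwich.

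The missing ingredient is that the bias has essentially reached $b_\infty$ by the crossing time, so that $e_b$ can be dropped. The paper gets this from the large-$g$ expansion of the closed form,
\[
b(t)+\log\delta\approx-\frac{2(1-\delta e^{b_0})}{1+\delta e^{b_0}}e^{-g(t)},
\]
which decays like $(t/t_0)^{-2\sqrt{A_S^+A_S^-}}$ while $\|\w(t)\|$ grows only like $\log t$. It then sets $|b(T_{te})|=|\log\delta|$. It applies the triangle inequality (its ``Cauchy--Schwarz'' step) only to $\w=\hat{\w}\log t+\rho$ with $\|\rho\|\le B$, which produces exactly the $\pm\gamma B/2$.

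The paper treats this as an approximation rather than proving a bound on $e_b$. Still, some such step is needed: you must either neglect $e_b$ in the same way or show $|e_b|\le k\gamma B'/2$ at $t=T_{te}$.

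Separately, the weight residual needs only $\left|\,\|\w(T)\|-\frac{2}{\gamma}\log T\right|\le\|\rho(T)\|\le B$. The quantities $\tilde{\w}^\top\x$ and the radius $R$ play no role there.
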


\begin{proof}
    In the late time dynamics, from assumption \eqref{latetimedyn} the hyperplane movement is parallel to that of the true seperator $\hat{\w}$. The test accuracy in this regime can be written as
    \begin{align*}
        Q(t) = 1- \frac{|b(t)|/||\w(t)||-\gamma/2}{2(\alpha\gamma - \gamma/2)}
    \end{align*}
    From the definition of $T_{te}(\epsilon)$, we have
    \begin{align*}
        Q(T_{te}(\epsilon)) = (1-\epsilon) \cdot 1 \implies \frac{|b(T_{te}(\epsilon))|}{||\w(T_{te}(\epsilon))||} = \frac{\gamma}{2} + \epsilon(2\alpha-1)\gamma 
    \end{align*}

    The evolution of the bias is given by
    \begin{align*}
         b(t) = \log\!\left(
\frac{(1+\delta e^{b_{0}})\exp({g(t)})-(1-\delta e^{b_{0}})}
{(1+\delta e^{b_{0}})\exp({g(t)})+(1-\delta e^{b_{0}})}
\right) - \log \delta
    \end{align*}
    In the late time dynamics, $\exp(g(t))$ becomes large and we can approximate the above evolution by 
    \begin{align*}
         b(t) = \log\!\left(
\frac{1-\frac{(1-\delta e^{b_{0}})}{(1+\delta e^{b_{0}})}\exp(-{g(t)})}
{1+\frac{(1-\delta e^{b_{0}})}{(1+\delta e^{b_{0}})}\exp(-{g(t)})}
\right) - \log \delta \approx -\frac{2(1-\delta e^{b_{0}})}{(1+\delta e^{b_{0}})}\exp(-{g(t)}) - \log \delta
    \end{align*}
    Substituting for $g(T)$, $b(t)$ decays to $b_\infty$ as $t^{-2\sqrt{A_S^+A_S^-}}$. However, $||\w(t)||$ increases as $\mathcal{O}(\log t)$. Hence, for a small enough $\epsilon$, the numerator becomes close enough to  $-\log \delta$ and we can write
    \begin{align*}
        \frac{|\log \delta|}{\frac{\gamma}{2} + \epsilon(2\alpha-1)\gamma } = ||\w(T_{te}(\epsilon))||
    \end{align*}
    Now, for this to occur, we bound $T_{te}(\epsilon)$ by using the Cauchy-Schwarz inequality:
    \begin{align*}
        \frac{2}{\gamma} \log T_{te}(\epsilon) - B  \leq \frac{|\log \delta|}{\frac{\gamma}{2} + \epsilon(2\alpha-1)\gamma } \leq  \frac{2}{\gamma} \log T_{te}(\epsilon) + B \\
        \implies \exp \left( \frac{\gamma B}{2} + \frac{|\log \delta|}{1 + 2\epsilon(2\alpha-1)}\right) \geq T_{te}(\epsilon) \geq \exp \left( -\frac{\gamma B}{2} + \frac{|\log \delta|}{1 + 2\epsilon(2\alpha-1)}\right)
    \end{align*}
    This concludes the proof.

    \textbf{Additional comments for Remark \eqref{tgrokapprox}}: Going through similar steps for $T_{tr}(\epsilon)$, we can write
    \begin{align*}
        \frac{2}{\gamma} \log T_{tr}(\epsilon) + B \geq \frac{|\log \delta|}{\frac{\gamma}{2} + \epsilon(2(R/\gamma)-1)\gamma } \geq  \frac{2}{\gamma} \log T_{tr}(\epsilon) - B \\
        \implies  \exp \left( -\frac{\gamma B}{2} + \frac{|\log \delta|}{1 + 2\epsilon(2(R/\gamma)-1)}\right) \leq T_{tr}(\epsilon) \leq \exp \left( \frac{\gamma B}{2} + \frac{|\log \delta|}{1 + 2\epsilon(2(R/\gamma)-1)}\right)
    \end{align*}
    Therefore for large $R/\gamma$ we have $T_{te}(\epsilon)$ is much larger than $T_{tr}(\epsilon)$. Hence we can say 
    \begin{align*}
        T_{gr}(\epsilon) \approx T_{te}(\epsilon).
    \end{align*}
\end{proof}

\section{Experimental Setup}
\label{app:experimental_setup}

In this section, we provide a detailed description of the experimental setup used throughout the paper, including data generation, model specification, optimization procedure, and evaluation protocol. The goal of these experiments is to isolate and study grokking-like behavior in a minimal and fully controlled linear classification setting.

\subsection{Synthetic Data Generation}

We consider a binary classification problem in $\mathbb{R}^2$ and $\R^{1024}$. Let the ground-truth separating hyperplane be defined by a unit-norm vector
\[
\w_{\text{true}} = \frac{1}{\sqrt{2}}[1, 1, \ldots, 1] \in \R^d.
\]
Labels are assigned according to the sign of the projection onto $\w_{\text{true}}$.

We generate two datasets: a \emph{training distribution} $\mathcal{P}$ and a \emph{concentrated test distribution} $\mathcal{P}'$, which differ only in the margin at which data points are sampled.

\paragraph{Training Data.}
For $d=2$, the training dataset consists of $N = 2000$ points, with $1000$ samples per class. For $d=1024$, the training dataset consists of $N = 10^6$ points, with $5 \times 10^5$ samples per class.  A positive-labeled point $\mathbf{x}$ is accepted if
\[
\w_{\text{true}}^T\mathbf{x} \geq \frac{\gamma}{2},
\]
and a negative-labeled point is accepted if
\[
\w_{\text{true}}^T\mathbf{x} \leq -\frac{\gamma}{2}.
\]
Here, $\gamma$ denotes the base margin parameter, which we fix to $\gamma = 10^{-3}$. Candidate points are drawn uniformly from the square $[-5,5]^d$ and rejected until the margin constraint is satisfied. This procedure ensures that the training data are linearly separable with margin $\gamma/2$.

\paragraph{Test Data (Concentrated).}
To induce a controlled concentrated distribution, the test dataset is generated from a \emph{near-boundary region}. Specifically, we introduce a scaling parameter $\alpha > 1$ and define the test margin as $\gamma_{\text{test}} = \alpha \gamma$. In all experiments, we set $\alpha = 10$. Positive test points satisfy
\[
\frac{\gamma}{2} < \w_{\text{true}}^T\mathbf{x} \leq \alpha \gamma,
\]
while negative test points satisfy
\[
-\alpha \gamma \leq \w_{\text{true}}^T\mathbf{x} < -\frac{\gamma}{2}.
\]
As in the training set, points are sampled uniformly from $[-5,5]^2$ and rejected until the corresponding constraints are met. The test set also contains $2000$ points when $d=2$ and $10^6$, balanced across classes. This construction ensures that the test data lie closer to the decision boundary than the training data, while remaining linearly separable by the same ground-truth classifier.

\subsection{Model}

We consider a linear model with bias whose prediction for input $\x_i\in\R^d$ is given by $\hat{y}(\x_i;w,b)=w^\top \x_i+b$, where $w\in\R^d$ and $b\in\R$ are the weight vector and bias respectively.

We are interested in minimizing the following loss function
\begin{align}
    \mathcal{L}(\w,b) = \sum_{i=1}^N l(y_i(\w^\top\x_i+b))
\end{align}
where $l(x) = \log(1+\exp({-x}))$ is the logistic loss. 

\subsection{Optimization Procedure}

We optimize the logistic loss using gradient descent. The parameters are initialized at $\mathbf{w}_0 = \mathbf{0}$ and $b_0 = 0$.
\textbf{Gradient Update:} We use gradient descent to optimize this loss using
\begin{align*}
    \w_{t+1} = \w_{t} - \eta \nabla_{\w} \mathcal{L}(\w, b)\\
    b_{t+1} = b_{t} - \eta \nabla_{b} \mathcal{L}(\w, b).
\end{align*}

We use a fixed learning rate of $\eta = 10^{-2}$ and train for a total of $T = 10^7$ gradient descent steps. No explicit regularization, early stopping, or learning-rate scheduling is used.

\subsection{Evaluation Metrics and Logging}

At every iteration, we compute both training and test loss as well as classification accuracy. To reduce storage overhead, metrics are logged once every $100$ iterations. Accuracy is defined as the fraction of correctly classified examples under a $0.5$ decision threshold. All experiments are run with a fixed random seed to ensure reproducibility.

\subsection{Implementation Details}

All experiments are implemented in \texttt{Python} using \texttt{NumPy} for numerical computation. Progress over training iterations is tracked using \texttt{tqdm}. No automatic differentiation frameworks are used; gradients are computed explicitly in closed form.

\section{Adversarial Robustness Experiments}
\label{app:adversarial_setup}

In this appendix, we describe the experimental setup used to evaluate adversarial robustness in the high-dimensional linear classification setting. These experiments extend the main results by studying how delayed generalization interacts with adversarial vulnerability under distribution shift. The data generation process, model, optimization is identical to the previous setup.

\subsection{Adversarial Threat Model}

To evaluate robustness, we consider an untargeted $\ell_\infty$-bounded adversarial threat model. Given an input $\mathbf{x}$, an adversary may perturb it within an $\ell_\infty$ ball of radius $\varepsilon$:
\[
\|\boldsymbol{\delta}\|_\infty \leq \varepsilon.
\]

We generate adversarial examples using Projected Gradient Descent (PGD). Starting from either the original input or a random point within the $\ell_\infty$ ball, PGD iteratively updates adversarial examples via
\[
\mathbf{x}^{(t+1)} =
\Pi_{\|\boldsymbol{\delta}\|_\infty \leq \varepsilon}
\left(
\mathbf{x}^{(t)} + \alpha \cdot \mathrm{sign}
\left(
\nabla_{\mathbf{x}} \ell(\mathbf{w}^\top \mathbf{x}^{(t)} + b, y)
\right)
\right),
\]
where $\Pi$ denotes projection and $\ell$ is the binary cross-entropy loss.

In all experiments, we use:
\[
\varepsilon = 1, \qquad
\alpha = \frac{1}{4}, \qquad
\text{number of steps} = 20.
\]
Perturbations are clipped to the range defined by the minimum and maximum values observed in the dataset.

\subsection{Evaluation Protocol}

At regular intervals during training (every 100 optimization steps), we evaluate Training loss and accuracy, Clean test loss and accuracy, and Adversarial test accuracy under PGD. Adversarial accuracy is computed by first generating PGD adversarial examples for the full test set and then evaluating classification accuracy on these perturbed inputs. All experiments are run with fixed random seeds. Computation is performed in \texttt{PyTorch}.

\end{document}